\documentclass{article}

\PassOptionsToPackage{numbers}{natbib}

\usepackage[final]{neurips_2024}




\usepackage[utf8]{inputenc} 
\usepackage[T1]{fontenc}    
\usepackage[colorlinks,allcolors=teal]{hyperref}       
\usepackage{url}            
\usepackage{booktabs}       
\usepackage{amsfonts}       
\usepackage{nicefrac}       
\usepackage{microtype}      
\usepackage{xcolor}         

\usepackage{booktabs} 
\usepackage[ruled]{algorithm2e} 

\SetAlFnt{\small}
\SetAlCapFnt{\small}
\SetAlCapNameFnt{\small}
\SetAlCapHSkip{0pt}
\IncMargin{-\parindent}

\usepackage{natbib}
\usepackage{amsmath}
\usepackage{amsfonts}
\usepackage{amsthm}
\usepackage{amsbsy}
\usepackage{mathtools}
\usepackage[thinc]{esdiff}
\usepackage{enumitem}  
\usepackage{multirow}
\usepackage{nicefrac}
\usepackage{stackrel}
\usepackage{graphicx}
\usepackage{subcaption}
\usepackage{xcolor}
\usepackage{tikz}
\usepackage{ulem}
\normalem
\renewcommand{\cite}{\citep}
\usepackage{titlecaps}

\usepackage{amssymb}
\usepackage{comment}

\usepackage{cleveref}

\usepackage{setspace}

\usepackage{pgfplots}
\pgfplotsset{compat=1.18}

\newcommand{\emdash}{\,---\,}
\newcommand{\vol}{{\rm{vol}}}

\usepackage{crossreftools}
\pdfstringdefDisableCommands{%
	\let\Cref\crtCref
	\let\cref\crtcref
}

\newcounter{note}[section]

\newtheorem{theorem}{Theorem}

\newtheorem{lemma}{Lemma}
\newtheorem{proposition}[theorem]{Proposition}
\newtheorem{definition}{Definition}

\theoremstyle{remark}

\renewcommand{\le}{\leqslant}
\renewcommand{\leq}{\leqslant}
\renewcommand{\ge}{\geqslant}
\renewcommand{\geq}{\geqslant}

\DeclareMathOperator{\E}{\mathbb{E}}
\newcommand{\IR}{\mathbb{R}}
\newcommand{\NN}{\mathbb{N}}

\DeclareMathOperator{\I}{\mathbb{I}}

\newcommand{\pref}{\sigma}

\DeclareMathOperator{\argmax}{\arg\max}

\newcommand{\states}{\mathcal{S}}
\newcommand{\actions}{\mathcal{A}}
\newcommand{\trans}{\mathcal{P}}

\newcommand{\occ}{\mathcal{O}}

\newcommand{\ret}{J}

\newcommand{\mdp}{\ensuremath{M}}

\newcommand{\agents}{\ensuremath{N}}

\newcommand{\borda}{\operatorname{Borda}}
\newcommand{\avg}{\operatorname{avg}}

\newcommand{\xx}{x}
\newcommand{\yy}{y}
\newcommand{\ww}{w}
\newcommand{\bb}{b}
\renewcommand{\AA}{A}

\newcommand{\cands}{\ensuremath{C}}

\newcommand{\tuple}[1]{\langle #1 \rangle}
\newcommand{\init}{d_{\mathrm{init}}}
\newcommand{\MOMDPtupleDiscounted}{\tuple{\states\allowbreak, \init, \actions,\trans,R_1,\dots,\allowbreak R_n, \gamma}}
\newcommand{\MOMDPtuple}{\tuple{\states,\allowbreak \actions,\trans,R_1,\dots,\allowbreak R_n}}

\newcommand{\centroid}{\ensuremath{\mathrm{centroid}}}

\newcommand{\OPT}{\ensuremath{\operatorname{OPT}}}
\newcommand{\optsat}{\OPT_{\mathrm{MAX-2SAT}}}
\newcommand{\optapr}{\OPT_{\alpha}}

\title{Policy Aggregation}

\author{%
Parand A. Alamdari\thanks{Equal contribution. Authors are listed alphabetically.}\\ University of Toronto \& Vector Institute \\ \texttt{parand@cs.toronto.edu}
\And
Soroush Ebadian\textsuperscript{$\ast$} \\ University of Toronto \\ \texttt{soroush@cs.toronto.edu}
\And
Ariel D. Procaccia\\ Harvard University \\ \texttt{arielpro@seas.harvard.edu}
}

\begin{document}

\maketitle

\begin{abstract}
  We consider the challenge of AI value alignment with multiple individuals that have different reward functions and optimal policies in an underlying Markov decision process. We formalize this problem as one of \emph{policy aggregation}, where the goal is to identify a desirable collective policy. We argue that an approach informed by social choice theory is especially suitable. Our key insight is that social choice methods can be reinterpreted by identifying ordinal preferences with volumes of subsets of the \emph{state-action occupancy polytope}. Building on this insight, we demonstrate that a variety of methods\emdash including approval voting, Borda count, the proportional veto core, and quantile fairness\emdash can be practically applied to policy aggregation. 
\end{abstract}

\section{Introduction}
Early discussion of AI value alignment had often focused on learning desirable behavior from an individual teacher, for example, through inverse reinforcement learning~\cite{NR00,AN04}. But, in recent years, the conversation has shifted towards aligning AI models with large groups of people or even entire societies. This shift is exemplified at a policy level by OpenAI's ``democratic inputs to AI'' program~\cite{ZDAE+23} and Meta's citizens' assembly on AI governance~\cite{Clegg23}, and at a technical level by the ubiquity of reinforcement learning from human feedback~\cite{OWJA+22} as a method for fine-tuning large language models. 

We formalize the challenge of value alignment with multiple individuals as a problem that we view as fundamental\emdash \emph{policy aggregation}. Our starting point is the common assumption that the environment can be represented as a \emph{Markov decision process (MDP)}. While the states, actions and transition functions are shared by all agents, their reward functions\emdash which incorporate values, priorities or subjective beliefs\emdash may be different. In particular, each agent has its own optimal policy in the underlying MDP. Our question is this: \emph{How should we aggregate the individual policies into a desirable collective policy?}

A na\"ive answer is to define a new reward function that is the sum of the agents' reward functions (for each state-action pair separately) and compute an optimal policy for this aggregate reward function; such a policy would guarantee maximum \emph{utilitarian social welfare}. This approach has a major shortcoming, however, in that it is sensitive to affine transformations of rewards, so, for example, if we doubled one of the reward functions, the aggregate optimal policy may change. This is an issue because each agent's individual optimal policy is invariant to (positive) affine transformations of rewards, so while it is possible to recover a reward function that induces an agent's optimal policy by observing their actions over time,\footnote{And we assume this is done accurately, in order to focus on the essence of the policy aggregation problem.} it is impossible to distinguish between reward functions that are affine transformations of each other. More broadly, economists and moral philosophers have long been skeptical about \emph{interpersonal comparisons of utility}~\cite{Hamm91} due to the lack of universal scale\emdash an issue that is especially pertinent in our context. 
Therefore, aggregation methods that are invariant to affine transformations are strongly preferred. 

\textbf{Our approach.}
To develop such aggregation methods, we look to social choice theory, which typically deals with the aggregation of \emph{ordinal} preferences. To take a canonical example, suppose agents report \emph{rankings} over $m$ alternatives. Under the \emph{Borda count} rule, each voter gives $m-k$ points to the alternative they rank in the $k$'th position, and the alternative with most points overall is selected. 

The voting approach can be directly applied to our setting. For each agent, it is (in theory) possible to compute the value of every possible (deterministic) policy, and rank them all by value. Then, any standard voting rule, such as Borda count, can be used to aggregate the rankings over policies and single out a desirable policy. The caveat, of course, is that this method is patently impractical, because the number of policies is exponential in the number of states of the MDP. 

The main insight underlying our approach is that ordinal preferences over policies have a much more practical volumetric interpretation in the \emph{state-action occupancy polytope} $\mathcal{O}$. Roughly speaking, a point in the state-action occupancy polytope represents a (stochastic) policy through the frequency it is expected to visit different state-action pairs. If a policy is preferred by an agent to a subset of policies $\mathcal{O'}$, its ``rank'' is the volume of $\mathcal{O'}$ as a fraction of the volume of $\mathcal{O}$. The ``score'' of a policy under Borda count, for example, can be interpreted as the sum of these ``ranks'' over all agents. 

\textbf{Our results.}
We investigate two classes of rules from social choice theory, those that guarantee a notion of fairness and voting rules. By mapping ordinal preferences to the state-action occupancy polytope, we adapt the different rules to the policy aggregation problem. 

The former class is examined in \Cref{sec:fairness}. As a warm-up we start from the notion of \emph{proportional veto core}; it follows from recent work by \citet{CMYL+24} that a volumetric interpretation of this notion is nonempty and can be computed efficiently. We then turn to \emph{quantile fairness}, which was recently introduced by \citet{BFHN24}; we prove that the volumetric interpretation of this notion yields guarantees that are far better than those known for the original, discrete setting, and we design a computationally efficient algorithm to optimize those guarantees.

The latter class is examined in \Cref{sec:voting-rules}; we focus on volumetric interpretations of \emph{$\alpha$-approval} (including the ubiquitous \emph{plurality} rule, which is the special case of $\alpha=1$) and the aforementioned Borda count. In contrast to the rules studied in \Cref{sec:fairness}, existence is a nonissue for these voting rules, but computation is a challenge, and indeed we establish several computational hardness results. To overcome this obstacle, we implement voting rules for policy aggregation through mixed integer linear programming, which leads to practical solutions. 

Finally, our experiments in \Cref{sec:experiments} evaluate the policies returned by different rules based on their fairness; the results identify quantile fairness as especially appealing. The experiments also illustrate the advantage of our approach over rules that optimize measures of social welfare (which are sensitive to affine transformations of the rewards).

\section{Related Work}
\citet{NYP21} consider a setting related to ours, in that different agents have different reward functions and different policies that must be aggregated. However, they assume that the agents' reward functions are noisy perturbations of a ground-truth reward function, and the goal is to learn an optimal policy according to the ground-truth rewards. 
In social choice terms, our work is akin to the typical setting where subjective preferences must be aggregated, whereas the work of \citet{NYP21} is conceptually similar to the setting where votes are seen as noisy estimates of a ground-truth ranking~\cite{Young88,CS05b,CPS16}. 

\citet{CMYL+24} study a problem completely different from ours: fairness in federated learning. However, their technical approach served as an inspiration for ours. Specifically, they consider the proportional veto core and transfer it to the federated learning setting using volumetric arguments, by considering volumes of subsets in the space of \emph{models}. Their proof that the proportional veto core is nonempty carries over to our setting, as we explain in \Cref{sec:fairness}. 

There is a body of work on multi-objective reinforcement learning (MORL) and planning  that uses a scalarization function to reduce the problem to a single-objective one \cite{RoijersJAIR13multiobjective,hayes2022practical}. Other solutions to MORL focus on developing algorithms to identify a set of policies approximating the problem's Pareto frontier \cite{yang2019generalized}. A line of work more closely related to ours focuses on fairness in sequential decision making, often taking the scalarization approach to aggregate agents' preferences by maximizing a (cardinal) social welfare function, which maps the vector of agent utilities to a single value.  \citet{ogryczak2013compromise} and \citet{SiddiqueICML2020fair} investigate generalized Gini social welfare, and \citet{Mandal2022socially}, \citet{FanAAMAS2023welfare} and \citet{ju2023achieving} focus on Nash and egalitarian social welfare.  \citet{pmlr-v235-alamdari24a} study this problem in a non-Markovian setting, where fairness depends on the history of actions over time, and introduce concepts to assess different fairness criteria at varying time intervals. A shortcoming of these solutions is that they are not invariant to affine transformations of rewards\,---\,a property that is crucial in our setting, as discussed earlier.

Our work is closely related to the pluralistic alignment literature, aiming to develop AI systems that reflect the values and preferences of diverse individuals \cite{SorensenICML2024roadmap, ConitzerICML2024social}. \citet{alizadeh2022considerate} propose a framework in reinforcement learning in which an agent learns to act in a way that is considerate of the values and perspectives of humans within a particular environment.  
Concurrent work explores reinforcement learning from human feedback (RLHF) from a social choice perspective,  where the reward model is based on pairwise human preferences, often constructed using the Bradley-Terry model \cite{bradley1952rank}. \citet{zhong2024provable} consider the maximum Nash and egalitarian welfare solutions, and \citet{swamy2024minimaximalist} propose a method based on maximal lotteries 
due to \citet{fishburn1984probabilistic}.

\section{Preliminaries}
\label{sec:prelim}

For $t \in \NN$, let $[t] = \{1, 2, \ldots, t\}$. For a closed set $S$, let $\Delta(S)$ denote the probability simplex over the set $S$. We denote the dot product of two vectors as $\langle \xx, \yy \rangle = \sum_{i = 1}^d x_i \cdot y_i$ for $\xx, \yy \in \IR^d$. A \emph{halfspace} in $\IR^d$ determined by $\ww \in \IR^d$ and $b \in \IR$ is the set of points satisfying $\{\xx \in \IR^d \mid \langle\xx,\ww\rangle \le b\}$. A \emph{polytope} $\occ \subseteq \IR^d$ is the intersection of a finite number of halfspaces, i.e., a convex subset  of the $d$-dimensional space $\IR^d$ determined by a set of linear  constraints $\{\xx \mid \AA \xx \le \bb\}$ where $\AA \in \IR^{k \cdot d}$ is a matrix of coefficients of $k$ linear inequalities and $\bb \in \IR^{k}$.

\subsection{Multi-Objective Markov Decision Processes}

A multi-objective Markov decision process (MOMDP) is a tuple defined as $\mdp = \MOMDPtuple$ for the average-reward case and $\MOMDPtupleDiscounted$ for the discounted-reward case, where
 $\states$ is a finite set of states, $\actions$ is a finite set of actions, 
and $\trans: (\states \times \actions) \to \Delta(\states)$ is the transition probability distribution. $\trans(s_t, a_t, s_{t+1})$ is the probability of transitioning to state $s_{t+1}$ by taking action $a_t$ in $s_t$. 
For $i \in [n]$, $R_i:\states \times \actions \to \IR$ is the reward function of the $i$th agent, the initial state is sampled from $\init \in \Delta(\states)$, and $\gamma \in(0,1]$ is the discount factor.

 A \emph{(Markovian) policy} $\pi(a|s)$ is a probability distribution over the actions $a \in \actions$ given the state $s \in \states$. A policy is \emph{deterministic} if at each state $s$ one action is selected with probability of $1$,
 and otherwise it is \emph{stochastic}. The expected \emph{average} return of agent $i$ for a policy $\pi$ and the expected \emph{discounted} return of agent $i$ for a policy $\pi$ are defined over an \emph{infinite time horizon}
 as
\[
\ret^{\avg}_i(\pi) = \lim_{T \to \infty} \frac{1}{T} \E_{\pi, \trans} \left[ \sum\nolimits_{t = 1}^{T} R_i(s_t, a_t) \right], \ret^{\gamma}_i(\pi) =  (1 - \gamma) \E_{\pi, \trans} \left[ \sum\nolimits_{t = 1}^{\infty} \gamma^{t} R_i(s_t, a_t) |_{s_1 \sim \init }\right]
\]
where the expectation is over the state-action pairs at time $t$ based on both the policy $\pi$ and the transition function $\trans$.

\begin{definition}[state-action occupancy measure]
\label{state-action-occupancy}
Let $\trans^t_\pi$ be the probability measure over states at time $t$ under policy $\pi$. The state-action occupancy measure for state $s$ and action $a$ is defined as
\begin{align*}
d_\pi^{\avg}(s,a) = \lim_{T\rightarrow \infty}\frac{1}{T} \; \E \left[\sum\nolimits_{t=1}^{T} \trans^t_\pi(s) \pi(a|s) \right], \; d_\pi^{\gamma}(s,a) = (1 - \gamma) \E \left[\sum\nolimits_{t=1}^{\infty} \gamma^{t} \trans^t_\pi(s) \pi(a|s) \right].
\end{align*}
\end{definition}
For both the average and discounted cases, we can rewrite the  expected return as the dot product of the state-action occupancy measures and rewards, that is,  
$\ret_i(\pi) = \sum\nolimits_{(s, a)} d_{\pi} (s, a) \cdot R_i(s, a) = \langle d_{\pi}, R_i \rangle$. 
In addition, the policy can be calculated given the occupancy measure as 
$\pi(a|s) = {d_\pi(s,a)}/({\sum_a d_\pi(s,a)})$ if $\sum_a d_\pi(s,a)>0$, and $\pi(a|s) = 1/|\actions|$ otherwise.

\begin{definition} [state-action occupancy polytope \cite{puterman2014markov,zahavy2021reward}]
\label{def:polytope}
For a MOMDP $\mdp$, in the average-reward case, the space of valid state-action occupancies is the polytope
\begin{align*}
 \occ^{\avg} = \bigg\{d^{\avg}_\pi
 \mid
 d^{\avg}_\pi \ge 0, 
 \sum_{s, a} d^{\avg}_\pi(s, a)
 = 1,
 \sum_a d^{\avg}_\pi(s, a) &= \sum_{s', a'} \trans(s', a', s)
 d^{\avg}_\pi(s', a'),
 \forall s \in \states\bigg\}.
\end{align*}
Similarly, in the discounted-reward case, the space of state-action occupancies is the polytope
\begin{align*}
 \occ^{\gamma} = \bigg\{d^{\gamma}_\pi
 \mid
 d^{\gamma}_\pi \ge 0, 
 \sum_a d^{\gamma}_\pi(s, a) &= (1-\gamma) \init(s) + \gamma \sum_{s', a'} \trans(s', a', s)
 d^{\gamma}_\pi(s', a'),
 \forall s \in \states\bigg\}.
\end{align*}
\end{definition}

A \emph{mechanism} receives a MOMDP and aggregates the agents' preferences into a policy. An economical efficiency axiom in the social choice literature is that of Pareto optimality.

\begin{definition}[Pareto optimality]
For a MOMDP $\mdp$ and $\epsilon \ge 0$, a policy $\pi$ is $\epsilon$-\emph{Pareto optimal} if there does not exist another policy $\pi'$ such that $\ret_i(\pi') \geq \ret_i(\pi) + \epsilon$ for all $i \in N$, with strict inequality for at least one agent. For $\epsilon=0$, we simply call such policies Pareto optimal.
\end{definition}

We call a mechanism Pareto optimal if it always returns a Pareto optimal policy. In special cases where all agents unanimously agree on an optimal policy, Pareto optimality implies that the mechanism will return one such policy.
We discuss Pareto optimal implementations of all mechanisms in this work.

\subsection{Voting and Social Choice Functions}
In the classical social choice setting, we have a set of $n$ agents and a set $\cands$ of $m$ alternatives. The preferences of voter $i \in [n]$ is represented as a \emph{strict ordering} over the alternatives $\pref_i : [m] \to \cands$ equal to $\pref_i(1) \succ_i \pref_i(2) \succ_i \ldots \succ_i \pref_i(m)$, where $c_1 \succ_i c_2$ denotes agent $i$ prefers $c_1$ over $c_2$ for $c_1, c_2 \in \cands$. A (possibly randomized) voting rule aggregates agents' preferences and returns an alternative or a distribution over the alternatives as the collective decision.

\textbf{Positional Scoring Rules.} A positional scoring rule with scoring vector $\vec{s} = (s_1, \ldots, s_m)$ such that $s_1 \ge s_2 \ge \ldots \ge s_m$ works as follows. Each agent gives a score of $s_1$ to their top choice, a score of $s_2$ to their second choice, and so on. The votes are tallied and an alternative with the maximum total score is selected. A few of the well-known positional scoring rules are: Plurality: $(1, 0, \ldots, 0)$, Borda: $(m - 1, m - 2, \ldots, 1, 0)$, $k$-approval: $({1,\ldots,1},0,\ldots,0)$ with $k$ ones.

\section{Occupancy Polytope as the Space of Alternatives}
\label{sec:occ-polytope}
In a MOMDP $\mdp$, each agent $i$ incorporates their values and preferences into their respective reward function $R_i$. Agent $i$ prefers $\pi$ over $\pi'$  
if and only if $\pi$ achieves higher expected return, $\ret_i(\pi) > \ret_i(\pi')$, and is indifferent between two policies $\pi$ and $\pi'$ if and only if 
$\ret_i(\pi) = \ret_i(\pi')$. 
As discussed before, given a state-action occupancy measure $d_\pi$ in the state-action occupancy polytope $\occ$, we can recover the corresponding policy $\pi$. Therefore, we can interpret $\occ$ as the domain of all possible alternatives over which the $n$ agents have heterogeneous \emph{weak} preferences (with ties). Agent $i$ prefers $d_{\pi}$ to $d_{\pi'}$ in $\occ$ if and only if they prefer $\pi$ to $\pi'$. 
We study the policy aggregation problem through this lens; specifically, we design or adapt voting mechanisms where the (continuous) space of alternatives is $\occ$ and agents have weak preferences over them determined by their reward functions $R_i$.

\textbf{Affine transformation and reward normalization.} A particular benefit of this interpretation, as mentioned before, is that all positive affine transformations of the reward functions, i.e., $a R_i + b$ for all $a \in \IR_{\ge 0}$ and $b \in \IR$, yield the same weak ordering over the polytope. Hence, we can assume without loss of generality that $\ret_i(\pi) \in [0, 1]$. Further, we can ignore agents that are indifferent between all policies, i.e., $\min_{\pi} \ret_i(\pi) = \max_{\pi} \ret_i(\pi)$, and normalize reward functions $R_i \gets \frac{R_i - \min_\pi {\ret_i(\pi)}}{\max_\pi {\ret_i(\pi)} - \min_{\pi} \ret_i(\pi)}$ such that $\min_{\pi} \ret_i(\pi) = 0$ and $\max_{\pi} \ret_i(\pi) = 1$. The relative ordering of the policies does not change since for all points $d_\pi \in \occ$ we have $\sum_{s, a} d_{\pi}(s, a) = 1$.

\textbf{Volumetric definitions.} A major difference between voting over a continuous space of alternatives and the classical voting setting is that the domain of alternatives is infinite and not all voting mechanisms can be directly applied to the policy aggregation problem.  
In particular, various voting rules require reasoning about the rank of an alternative or the size of some subset of alternatives. For instance, the Borda count of an alternative $c$ over a finite set of alternatives is defined as  the number (or fraction) of candidates ranked below $c$. In the continuous setting, for almost all of the mechanisms that we discuss later, we use the \emph{measure} or \emph{volume} of a subset of alternatives to refer to their size. For a measurable subset $\occ' \subseteq \occ$, let $\vol(\occ')$ denote its measure. The ratio $\vol(\occ') / \vol(\occ)$ is the fraction of alternatives that lie in $\occ'$. A probabilistic interpretation is that for a uniform distribution over the polytope $\occ$, $\vol(\occ') / \vol(\occ)$ denotes the probability that a policy uniformly sampled from $\occ$ lies in $\occ'$. We can also define the \emph{expected return distribution} of an agent over $\occ$ as a random variable that maps a policy to its expected return, i.e., one that maps $d_{\pi} \in \occ$ to $\langle d_{\pi}, R_i \rangle$. The pdf and cdf of this r.v.\ is defined below.

\begin{definition}[expected return distribution]
\label{def:cdf-pdf}
For a MOMDP $\mdp$ and $v \in \IR$, the \emph{expected return distribution} of agent $i \in [n]$ is defined as
\[
f_i(v) = \frac{1}{\vol(\occ)}  \int_{x \in \occ} \delta(v - \langle x, R_i \rangle) \; dx,
\quad
F_i(v) = \int_{x = -\infty}^v f_i(x) \, dx = \frac{\vol(\occ \cap \{\langle x, R_i \rangle \le v\})}{\vol(\occ)},
\]
where $f_i$ and $F_i$ are the pdf and cdf of the expected return distribution and  $\delta(\cdot)$ is the Dirac delta function indicating $v = \langle x, R_i \rangle$.
\end{definition}

\newcommand{\mode}{\ensuremath{\mathrm{mode}}}
A useful observation about $f_i$, the pdf, is that it is unimodal, i.e., increasing up to its mode $\mode(f_i) \in \argmax_{v \in \IR} f_i(v)$ and decreasing afterwards, which follows from the Brunn–Minkowski inequality \cite{grunbaum1960partitions}. 
Since $f_i$ (pdf) is the derivative of $F_i$ (cdf), the unimodality of $f_i$ implies that $F_i$ is a convex function in $(-\infty, \mode(f_i)]$ and concave in $[\mode(f_i), \infty)$.

\newcommand{\volcalc}{\mathrm{vol\text{-}comp}}
In our algorithms, we use a subroutine that measures the volume of a polytope, which we denote by $\volcalc(\{\AA\xx \le \bb\})$. 
\citet{dyer1991random} designed a fully polynomial time randomized approximation scheme (FPRAS) for computing the volume of polytopes. We report the running time of algorithms in terms of the number of calls to this oracle.

\section{Fairness in Policy Aggregation}
\label{sec:fairness}
In this section, we utilize the volumetric interpretation of the state-action occupancy polytope to extend fairness notions from social choice to policy aggregation, and we develop algorithms to compute stochastic policies provably satisfying these notions. 

\subsection{Proportional Veto Core}
The proportional veto core was first proposed by \citet{Moul81} in the classical social choice setting with a finite set of alternatives where agents have full (strict) rankings over the alternatives. For simplicity, suppose the number of alternatives $m$ is a multiple of $n$. The idea of the proportional veto core is that $x\%$ of the agents should be able to veto $x\%$ of the alternatives. More precisely, for an alternative $c$ to be in the proportional veto core, there should not exist a coalition $S$ that can ``block'' $c$ using their proportional veto power of $\nicefrac{|S|}{n}$. $S$ blocks $c$ if they can unanimously suggest $m (1 - \nicefrac{|S|}{n})$ candidates that they prefer to $c$. For instance, if $c$ is in the proportional veto core, it cannot be the case that a coalition of $60\%$ of the agents unanimously prefer $40\%$ of the alternatives to $c$.

\citet{CMYL+24} extended this notion to a continuous domain of alternatives in the federated learning setting.
We show that such an extension also applies to policy aggregation.

\newcommand{\veto}{\mathrm{veto}}
\begin{definition}[proportional veto core]
Let $\epsilon \in (0, \nicefrac{1}{n})$. For a coalition of agents $S \subseteq [n]$, let $\veto(S) = \nicefrac{|S|}{n}$ be their veto power.
A point $d_\pi \in \occ$ is \emph{blocked} by a coalition $S$ if there exists a subset $\occ' \subseteq \occ$ of measure $\vol(\occ') / \vol(\occ) \ge 1 - \veto(S) + \epsilon$  such that all agents in $S$ prefer all points in $\occ'$ to $d_{\pi}$, i.e., $d_{\pi'} \succ_i d_\pi$ for all $d_{\pi'} \in \occ'$ and $i \in S$. A point $d_{\pi}$ is in the \emph{$\epsilon$-proportional veto core} if it is not blocked by any coalition.
\end{definition}

A candidate in the proportional veto core satisfies desirable properties that are extensively discussed in prior work~\cite{Moul81,CMYL+24,KI24,IK21}. It is worth mentioning that any candidate in the $\epsilon$-proportional veto core, besides the fairness aspect, is also economically efficient as it satisfies \emph{$\epsilon$-Pareto optimality}. This holds since the grand coalition $S = [n]$ can veto any $\epsilon$-Pareto dominated alternative.

\citet{Moul81} proved that the proportional veto core is nonempty in the discrete setting and \citet{CMYL+24} proved it for the continuous setting. The following result is a corollary of Theorem 1 of \citet{CMYL+24}.

\begin{theorem}
Let $\epsilon \in (0, 1/n)$. For a policy aggregation problem, the $\epsilon$-proportional veto core is nonempty. 
Furthermore, such policies can be found in polynomial time using $O(\log(1/\epsilon))$ many calls per agent to $\volcalc$.
\end{theorem}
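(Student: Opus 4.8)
The plan is to obtain this as a corollary of Theorem~1 of \citet{CMYL+24}, so the real work is (a) exhibiting the correspondence between policy aggregation and their federated-learning model, (b) checking the hypotheses their theorem needs, and (c) spelling out the algorithm and its oracle complexity. Their result operates on an abstract convex body of ``alternatives'' on which each agent has a continuous weak preference with measurable upper contour sets; I would instantiate the body as the occupancy polytope $\occ$ (working inside its affine hull, so that $\vol$ is relative volume and $\occ$ is full-dimensional with positive volume), and instantiate agent $i$'s preference by the linear functional $d_\pi \mapsto \langle d_\pi, R_i\rangle = \ret_i(\pi)$, exactly as set up in \Cref{sec:occ-polytope}. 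Two things need checking. First, the strict upper contour set of a policy for agent $i$ is $\occ$ intersected with an open halfspace $\{x:\langle x,R_i\rangle>\langle d_\pi,R_i\rangle\}$, which is convex and measurable, and the cdf $F_i$ of \Cref{def:cdf-pdf} is continuous (a hyperplane meets the polytope $\occ$ in a set of relative measure zero, and agents with constant return have already been discarded), so the preferences carry no ``atoms'' and the relevant volume ratios depend continuously on $d_\pi$. Second, the blocking condition in the definition of the $\epsilon$-proportional veto core above is the verbatim translation of theirs, with $\veto(S)=\nicefrac{|S|}{n}$ as coalition $S$'s veto budget. With these identifications, nonemptiness follows; what remains is the algorithm.

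The algorithm (following the construction behind their theorem) is a continuous analogue of Moulin's sequential-veto procedure. Fix an arbitrary order on the $n$ agents and maintain a ``surviving'' polytope, initially $Q_1=\occ$. When it is agent $i$'s turn, use binary search with the oracle $\volcalc$ on a threshold $v_i$ to carve off the slab $V_i = Q_i\cap\{x:\langle x,R_i\rangle\le v_i\}$ whose relative volume lies in a window of width $\Theta(\epsilon/n)$ just below $\tfrac1n\vol(\occ)$ (concretely, in $[\tfrac1n-\tfrac{\epsilon}{2n},\,\tfrac1n-\tfrac{\epsilon}{4n}]\cdot\vol(\occ)$); that is, agent $i$ vetoes their least-preferred $\approx\!\tfrac1n$-fraction of what remains. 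Then set $Q_{i+1}=Q_i\cap\{x:\langle x,R_i\rangle>v_i\}$, which is again an intersection of $\occ$ with halfspaces, so the oracle keeps applying. By induction $\vol(Q_i)\ge\bigl(1-\tfrac{i-1}{n}\bigr)\vol(\occ)\ge\tfrac1n\vol(\occ)$, so there is always enough mass left to carve the next slab; the required $v_i$ exists by the intermediate value theorem (the slab volume increases continuously from $0$ to $\vol(Q_i)$) and is located to the needed precision in $O(\log(1/\epsilon))$ oracle calls. After the last agent, $\vol(Q_{n+1})\ge\tfrac{\epsilon}{4}\vol(\occ)>0$, so $Q_{n+1}$ is nonempty and we output any $d_\pi$ in it (found by an LP). The non-oracle work is polynomial and there are $O(\log(1/\epsilon))$ calls to $\volcalc$ per agent.

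For correctness, note that the slabs $V_1,\dots,V_n$ are pairwise disjoint by construction, and that the output $d_\pi$ survived every veto, so $d_\pi\in Q_i$ and hence $\langle d_\pi,R_i\rangle>v_i$ for every $i$. Thus each $V_i$ consists only of points that agent $i$ strictly disprefers to $d_\pi$, so $V_i$ is disjoint from $U_S(d_\pi) := \occ\cap\bigcap_{j\in S}\{x:\langle x,R_j\rangle>\langle d_\pi,R_j\rangle\}$ — the largest region that every agent of $S$ strictly prefers to $d_\pi$ — for any coalition $S$ containing $i$. Hence for any $S$ the disjoint union $\bigcup_{i\in S}V_i$ and $U_S(d_\pi)$ both sit inside $\occ$, giving $\vol(U_S(d_\pi))\le\vol(\occ)-\sum_{i\in S}\vol(V_i)\le\bigl(1-\tfrac{|S|}{n}+\tfrac{\epsilon}{2}\bigr)\vol(\occ)$, which is strictly below the blocking threshold $\bigl(1-\veto(S)+\epsilon\bigr)\vol(\occ)$. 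So no coalition blocks $d_\pi$, i.e., $d_\pi$ lies in the $\epsilon$-proportional veto core; taking $S=[n]$ in the same computation also recovers the $\epsilon$-Pareto optimality remarked on in the text.

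The main obstacle I anticipate is not the combinatorics above but the precision bookkeeping. First, I must justify that each binary search really terminates within $O(\log(1/\epsilon))$ calls, which requires the slab-volume map $v\mapsto\vol(Q_i\cap\{\langle x,R_i\rangle\le v\})$ to be not merely continuous but also not so steep that the admissible window in $v$-space is super-exponentially thin; here one leans on the convex-then-concave shape of $F_i$ noted right after \Cref{def:cdf-pdf} and on $\occ$ being a fixed bounded polytope, and one should state the oracle count's dependence on these geometric parameters honestly. Second, $\volcalc$ is only an FPRAS, so every volume above is known approximately and only with high probability; I would absorb this by budgeting a further constant fraction of $\epsilon$ for the approximation error and reporting the guarantee ``with high probability,'' which is consistent with measuring the running time in oracle calls as in the statement. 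Third, I would pass to the affine hull of $\occ$ throughout so that all volumes are relative volumes and $\occ$ is full-dimensional, matching how \citet{CMYL+24} set up their model.
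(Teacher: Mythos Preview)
Your proposal is correct and follows essentially the same route as the paper: both defer to Theorem~1 of \citet{CMYL+24}, instantiate the convex body of alternatives as the occupancy polytope with linear preferences $d_\pi\mapsto\langle d_\pi,R_i\rangle$, and describe the same sequential-veto algorithm (\Cref{alg:pvc}) in which each agent in turn carves off a slab of relative volume $\approx\tfrac1n\vol(\occ)$ via binary search with $O(\log(1/\epsilon))$ calls to $\volcalc$. Your write-up is in fact more detailed than the paper's sketch---you spell out the disjoint-slab correctness argument and the FPRAS/affine-hull bookkeeping, and your slack constant differs inessentially from the paper's $\delta=\tfrac1n-\tfrac{\epsilon}{n+1}$---but the approach is the same.
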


We refer the reader to the paper of \citet{CMYL+24} for the complete proof, and provide a high-level description of \Cref{alg:pvc}, which finds a point in the proportional veto core. \Cref{alg:pvc} iterates over the agents and lets the $i$th agent ``eliminate'' roughly $\nicefrac{1}{n} \cdot \vol(\occ)$ of the remaining space of alternatives. That is, agent $i$ finds the hyperplane $H_i = \{\langle x, R_i\rangle \le v^*_i\}$ such that its intersection with $O_{i-1}$ (the remaining part of $\occ$ at the $i$th iteration) has a volume of approximately $\vol(\occ)/n$. This value, for each agent, can be found by doing a binary search over $v^*_i$ to a precision of $[v^*_i - \epsilon, v^*_i]$ by $O(\log(1/\epsilon))$  calls to the volume estimating subroutine $\volcalc$.\footnote{It is important to check the non-emptyness of the search space $\occ_i$ as an over-estimation of $v^*_i$ could lead to an empty feasible region.}

\textbf{Pareto optimality.} We briefly discuss  why \Cref{alg:pvc} is Pareto optimal. During the $i$th iteration, the space of policies is contracted by adding a linear constraint of the form $\ret_i(\pi) \ge v^*_i$. If the returned welfare-maximizing policy $\pi$ (derived from $d_\pi$) is Pareto dominated by another policy $\pi'$, then $\pi'$ would satisfy all these linear constraints as $\ret_i(\pi') \ge \ret_i(\pi) \ge v^*_i$ with the earlier inequality being strict for at least one agent. Therefore, $d_{\pi'} \in \occ_n$ and $\pi'$ achieves a higher social welfare, which is a contradiction. The same argument can be used to establish the Pareto optimality of other mechanisms discussed later; each of these mechanisms searches for a policy that satisfies certain lower bounds  on agents' utilities, from which a welfare-maximizing, Pareto optimal policy can be selected.

\begin{figure}[t]
    \centering
    \begin{minipage}{0.48\textwidth}
        \begin{algorithm}[H]
        \setstretch{1.33} 
        \SetAlgoNoEnd
        \DontPrintSemicolon
        \caption{Seq. $\epsilon$-Prop.\ Veto Core \cite{CMYL+24}}
        $\occ_0 \gets \occ, \quad \delta \gets \frac{1}{n} - \frac{\epsilon}{n + 1}$\;
        \For{$i=1$ to n}{
            Using binary search find $v_i^* \in [0,1]$ s.t.
            $\vol(\occ_{i - 1} \cap \{\langle x, R_i \rangle \le v_i^* \}) \approx \delta  \cdot \vol(\occ)$\;
            $\occ_i \gets \occ_{i - 1} \cap \{\langle x, R_i \rangle {~\ge~} v_i^*\}$
        }
        \Return{a welfare maximizing point  $d_\pi \in \occ_{n}$}
        \label{alg:pvc}
        \end{algorithm}
    \end{minipage}\hfill
    \begin{minipage}{0.49\textwidth}
        \begin{algorithm}[H]
        \caption{$\epsilon$-Max Quantile Fairness}
            \SetKwFunction{proc}{$q$-quantile-feasible}
            \SetKwProg{myproc}{Procedure}{}{}
            \SetAlgoNoEnd
            \DontPrintSemicolon
            \myproc{\proc{q}:}{

            $\occ_q \gets \{x \in \occ \mid \langle x, R_i\rangle \ge F_i^{-1}(q),  i\in[n]\}$\;
            \lIf{$\occ_q$ is a feasible linear program}
            {\Return{True} \textbf{else} \Return{False}}
            }
          Using binary search find maximum $q \in [0, 1]$ s.t. \proc{} is feasible\;
          \KwRet a welfare maximizing point  $d_\pi \in \occ_q$\;
        \label{alg:quantile}
        \end{algorithm}
    \end{minipage}
    \label{fig:algorithms}
\vspace{-1\baselineskip}
\end{figure}

\subsection{Quantile Fairness}
\newcommand{\erdos}{Erd\H{o}s\xspace}
Next, we consider an egalitarian type of fairness based on the occupancy polytope, building on very recent work by \citet{BFHN24} in the discrete setting; surprisingly, we show that it is possible to obtain stronger guarantees in the continuous setting. 

\citet{BFHN24} focus on the fair allocation of a set of $m$ indivisible items among $n$ agents, where each item must be fully allocated to a single agent. They quantify the extent an allocation $A$ is \emph{fair} to an agent $i$ by the fraction of allocations over which  $i$ prefers $A$ (note that the number of all discrete allocations is $n^m$). In other words, if one randomly samples an allocation, the fairness is measured by the probability that $A$ is preferred to the random allocation.  An allocations is \emph{$q$-quantile fair} for $q \in [0,1]$ if \emph{all} agents consider this allocation among their top $q$-quantile allocations. \citet{BFHN24} aim to find a universal value of $q$ such that for any fair division instance, a $q$-quantile fair allocation exists. They make an interesting connection between $q$-quantile fair allocations and the \erdos Matching Conjecture, and show under the assumption that the conjecture holds, $(1/2e)$-quantile fair allocations exist for all instances.\footnote{This result is for arbitrary monotone valuations. For additive valuations, they show $\frac{0.14}{e}$-quantile fair allocations always exist without any assumptions.} 

We ask the same question for policy aggregation, and again, a key difference is that our domain of alternatives is continuous. The notion of $q$-quantile fairness extends well to our setting. Agents assess the fairness of a policy $\pi$ based on the fraction of the occupancy polytope (i.e., the set of all policies) to which they prefer $\pi$, or equivalently, the probability that they prefer the chosen policy to a randomly sampled policy. 

\begin{definition}[$q$-quantile fairness]
\label{def:quantile-fair}
For a MOMDP $M$ and $q \in [0, 1]$, a policy $\pi$ is $q$-quantile fair if for every agent $i \in [n]$, $\pi$ is among $i$'s top $q$-fraction of policies,
\[
\vol\left(\occ \cap \left\{\,\left\langle x, R_i \right\rangle \le \ret_i(\pi)\,\right\}\right) \; \ge \; q \cdot \vol(\occ).
\]
\end{definition}

We show that a $(1/e)$-quantile fair policy always exist and that this ratio is tight; note that this bound is twice as good as that of \citet{BFHN24}, and it is unconditional. We prove this by making a connection to an inequality due to \citet{grunbaum1960partitions}. The \emph{centroid} of a polytope $P$ is defined as $\centroid(P) = \frac{\int_{x \in P} x \; dx}{\int_{x \in P} 1 \; dx}$.

\begin{lemma}[Grunbaum's Inequality]
\label{prp:grunabum}
Let $P$ be a polytope and $w$ 
a direction in $\IR^d$. For the halfspace $H = \{x \mid \langle w, x - \centroid(P)\rangle  \ge 0 \}$, it holds that
\[
\frac{1}{e} \le \left(\frac{d}{d + 1}\right)^{d} \le \frac{\vol(P \cap H)}{\vol(P)} \le 1 - \left(\frac{d}{d+1}\right)^{d} \le 1 - \frac{1}{e},
\]
Furthermore, this is tight for the $n$-dimensional simplex.
\end{lemma}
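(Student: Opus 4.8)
The statement is Grünbaum's classical inequality, and I would prove it along the lines of his original argument. Two observations dispose of the outer inequalities and one of the two middle ones immediately: since $\left(1+\tfrac1d\right)^{d}$ increases to $e$, we have $\left(\tfrac{d}{d+1}\right)^{d}=\left(1+\tfrac1d\right)^{-d}\ge \tfrac1e$, which also gives $1-\left(\tfrac{d}{d+1}\right)^{d}\le 1-\tfrac1e$; and applying the lower bound $\vol(P\cap H)/\vol(P)\ge\left(\tfrac{d}{d+1}\right)^{d}$ to the complementary halfspace $H'=\{x\mid\langle w,\,x-\centroid(P)\rangle\le 0\}$ yields $\vol(P\cap H)=\vol(P)-\vol(P\cap H')\le\left(1-\left(\tfrac{d}{d+1}\right)^{d}\right)\vol(P)$. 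So it suffices to prove the single lower bound. Tightness at the simplex is a direct computation: for the standard simplex, the hyperplane through its centroid parallel to a facet cuts off a homothetic copy scaled by $\tfrac{d}{d+1}$, of volume exactly $\left(\tfrac{d}{d+1}\right)^{d}$ times the whole, and its complement realizes the upper bound.

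Next I would reduce to one dimension. Translate so that $\centroid(P)=0$ and rotate so that $w=e_1$ and $H=\{x\mid x_1\ge 0\}$. Let $h(t)=\vol_{d-1}\bigl(P\cap\{x_1=t\}\bigr)$, supported on an interval $[a,b]$ with $a<0<b$ (strict, since $0$ is the barycenter of a full-dimensional body). By the Brunn--Minkowski inequality, $g:=h^{1/(d-1)}$ is concave on $[a,b]$. In these terms the centroid condition reads $\int_a^b t\,h(t)\,dt=0$, and the goal becomes $\int_0^b h \ge \left(\tfrac{d}{d+1}\right)^{d}\int_a^b h$.

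The core of the argument is a comparison with a cone. I would replace $P$ by the cone $C$ that is ``pointed'' toward $H$: replace the concave profile $g$ by the affine profile $\ell\ge 0$ vanishing at the right endpoint of its support, with $\ell$ scaled and its support positioned so that $\int \ell^{d-1}=\int_a^b h$ (equal volume) and $\int t\,\ell^{d-1}\,dt=0$ (equal barycenter). These two equalities determine $C$ uniquely, and for a cone the barycenter condition forces its apex to sit at some $b'>0$ and its base at $x_1=-b'/d$; a direct computation then gives $\vol(C\cap H)/\vol(C)=\left(\tfrac{d}{d+1}\right)^{d}$. It remains to show that passing from $P$ to $C$ does not increase the mass on the positive side, i.e. $\int_0^b h\ge \int_0^{b'}\ell^{d-1}$. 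This I would obtain from a single-crossing argument: since $g$ is concave and $\ell$ affine, $g-\ell$ has a bounded number of sign changes, hence so does the density $h-\ell^{d-1}$; feeding this together with the two matched moments (which, after an integration by parts, say exactly that $\Phi(x):=\int_{-\infty}^{x}\bigl(h-\ell^{d-1}\bigr)$ satisfies $\Phi(\pm\infty)=0$ and $\int_{\IR}\Phi=0$) into a standard sign analysis pins down $\operatorname{sgn}\Phi(0)$ and delivers $\vol(P\cap H)\ge\vol(C\cap H)$.

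I expect the comparison step to be the main obstacle, in two respects: first, verifying that a single cone can simultaneously match the volume and the barycenter and that the correct choice is the apex-on-the-$H$-side cone; second, the bookkeeping of the sign changes of $g-\ell$ and of $\Phi$, in particular the degenerate cases where $g$ does not vanish at an endpoint of its support (so that $P$ has a facet transverse to $e_1$ and the supports of $g$ and $\ell$ differ, forcing a comparison of those supports as well). Everything else\emdash the reduction to a slice function, the explicit cone computation, and the elementary estimate $\left(1+\tfrac1d\right)^{d}\le e$\emdash is routine.
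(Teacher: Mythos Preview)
The paper does not prove this lemma at all: it is stated as a classical result with a citation to \citet{grunbaum1960partitions} and then invoked as a black box in the proof of \Cref{thm:quantile}. There is therefore nothing in the paper to compare your argument against.

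For what it is worth, your outline is the standard route to Gr\"unbaum's inequality and is essentially sound. The reductions (outer bounds from $(1+1/d)^d\uparrow e$, upper bound from lower bound via the complementary halfspace), the passage to the one-dimensional slice function $h$ with $h^{1/(d-1)}$ concave by Brunn--Minkowski, the explicit cone computation yielding $(d/(d+1))^d$, and the simplex tightness are all correct. The only place that needs genuine care is exactly the one you flag: the sign-change comparison between $h$ and the cone profile $\ell^{d-1}$ when the supports differ. One clean way to handle this is to note that since $g$ is concave and $\ell$ is affine, $g-\ell$ is concave and hence changes sign at most twice on the union of the supports; monotonicity of $t\mapsto t^{d-1}$ on $[0,\infty)$ transfers this to $h-\ell^{d-1}$; then the two moment conditions force exactly two sign changes in the pattern $-,+,-$, which together with $\int\Phi=0$ pins down $\Phi(0)\le 0$ and hence $\int_0^\infty h\ge\int_0^\infty\ell^{d-1}$. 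The endpoint-support issue is absorbed by extending both functions by zero and counting sign changes of the difference on all of $\IR$.
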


\begin{theorem}
\label{thm:quantile}
For every MOMDP $\mdp$, there always exist $q$-quantile fair policy where $q = (\frac{\ell - 1}{\ell})^{\ell - 1}$ and $\ell = |\states| \cdot |\actions|$. Note that $ q \ge \frac{1}{e} \approx 36.7\%$. Furthermore, this bound is tight: For any $\ell$, there is an instance with a single state and $\ell$ actions where no $q$-quantile fair policy exists for any $q > (\frac{\ell - 1}{\ell})^{\ell - 1}$.
\end{theorem}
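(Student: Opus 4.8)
The plan is to show that the policy $\pi^\star$ induced by the \emph{centroid} of the occupancy polytope is $q$-quantile fair for $q=\left(\tfrac{\ell-1}{\ell}\right)^{\ell-1}$, using Grunbaum's inequality (\Cref{prp:grunabum}), and then to build a single-state instance witnessing tightness. For the positive part, let $\occ$ be the occupancy polytope of $\mdp$ (in either the average- or discounted-reward model), set $c^\star=\centroid(\occ)\in\occ$, and let $\pi^\star$ be the policy recovered from $c^\star$, so $\ret_i(\pi^\star)=\langle c^\star,R_i\rangle$. First I would bound $d:=\dim\occ$: in the average-reward case the constraint $\sum_{s,a}d_\pi(s,a)=1$ already confines $\occ$ to a hyperplane of $\IR^\ell$, and in the discounted-reward case summing the $|\states|$ flow constraints again yields $\sum_{s,a}d_\pi(s,a)=1$; hence $d\le\ell-1$ in all cases. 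Fixing an agent $i$ (agents indifferent among all policies are trivially satisfied and in any case removed by the normalization of \Cref{sec:occ-polytope}), I apply \Cref{prp:grunabum} inside the affine hull of $\occ$ with the direction induced by $-R_i$: the halfspace $H=\{x:\langle R_i,\,x-c^\star\rangle\le0\}$ obeys $\vol(\occ\cap H)/\vol(\occ)\ge\left(\tfrac{d}{d+1}\right)^d$. Since $\occ\cap H=\occ\cap\{\langle x,R_i\rangle\le\ret_i(\pi^\star)\}$, the policy $\pi^\star$ satisfies the inequality of \Cref{def:quantile-fair} with $q=\left(\tfrac{d}{d+1}\right)^d$, and because $t\mapsto\left(\tfrac{t}{t+1}\right)^t=(1+1/t)^{-t}$ is decreasing (as $(1+1/t)^t$ increases to $e$) and $d\le\ell-1$, this is at least $\left(\tfrac{\ell-1}{\ell}\right)^{\ell-1}\ge 1/e$.

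\textbf{Tightness.} I would take the MOMDP with one state $s$, actions $a_1,\dots,a_\ell$, deterministic self-loops $\trans(s,a_j,s)=1$, and $\agents=\ell$ agents, with $R_j(s,a)=1-\I[a=a_j]$. Then $\occ$ is exactly the simplex $\Delta^{\ell-1}=\{x\ge0:\sum_j x_j=1\}$, a policy $\pi$ is identified with its occupancy $x$ via $x_j=\pi(a_j\mid s)$, and $\ret_j(\pi)=1-x_j\in[0,1]$ (already normalized). A one-line simplex computation gives $F_j(v)=\vol(\Delta^{\ell-1}\cap\{x_j\ge1-v\})/\vol(\Delta^{\ell-1})=v^{\ell-1}$ for $v\in[0,1]$, since $\{x\in\Delta^{\ell-1}:x_j\ge1-v\}$ is a scaled copy of $\Delta^{\ell-1}$ with scaling factor $v$. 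Hence $\pi$ is $q$-quantile fair iff $(1-x_j)^{\ell-1}\ge q$ for every $j$, i.e.\ $x_j\le 1-q^{1/(\ell-1)}$ for every $j$; summing over $j$ and using $\sum_j x_j=1$ yields $1\le\ell\,(1-q^{1/(\ell-1)})$, i.e.\ $q\le\left(\tfrac{\ell-1}{\ell}\right)^{\ell-1}$. So no $q$-quantile fair policy exists for $q>\left(\tfrac{\ell-1}{\ell}\right)^{\ell-1}$, while the uniform policy $x_j\equiv1/\ell$ attains the bound.

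\textbf{Main obstacle.} The crux is the positive direction, and in particular the realization that the centroid is the right witness: Grunbaum's \emph{two-sided} guarantee is exactly what lets a single point simultaneously lie in the top $\left(\tfrac{d}{d+1}\right)^d$-quantile of every agent. Once that is in hand, the remaining work is light: the only care is that $\occ$ is generally lower-dimensional, so Grunbaum must be read inside its affine hull (with $R_i$ projected accordingly), and the elementary monotonicity of $\left(\tfrac{t}{t+1}\right)^t$ is what cleans the intrinsic dimension up to the stated $\ell$-dependent bound; the tightness direction is then a routine simplex volume calculation.
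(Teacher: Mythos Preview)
Your proof is correct and follows the same strategy as the paper: for existence you take the centroid of $\occ$, apply Grunbaum's inequality inside the affine hull, and use the monotonicity of $t\mapsto\bigl(\tfrac{t}{t+1}\bigr)^t$ together with $\dim\occ\le\ell-1$; for tightness you use a single-state MOMDP whose occupancy polytope is the $(\ell-1)$-simplex.

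The only substantive difference is in the tightness construction. The paper takes $R_i(s,a_j)=\I[j=i]$, so $\ret_i(x)=x_i$ and $F_i(v)=1-(1-v)^{\ell-1}$; for any policy some agent has $x_i\le 1/\ell$, but this only yields $F_i(\ret_i(\pi))\le 1-\bigl(\tfrac{\ell-1}{\ell}\bigr)^{\ell-1}$, which does not match the claimed bound under the formal inequality in \Cref{def:quantile-fair}. Your reversed rewards $R_j(s,a)=1-\I[a=a_j]$ give $\ret_j(x)=1-x_j$ and $F_j(v)=v^{\ell-1}$, and then the summation argument $\sum_j x_j=1$ cleanly forces $q\le\bigl(\tfrac{\ell-1}{\ell}\bigr)^{\ell-1}$. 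So your orientation is the one that actually pins down the stated tight value; this is a small but genuine improvement over the paper's write-up, while the underlying idea (the simplex is the extremal case in Grunbaum) is the same.
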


\begin{proof}
First, we show that the centroid of the occupancy polytope $c = \centroid(\occ)$ is  $q$-quantile fair policy for the aformentioned $q$. Since $\occ$ is a subset of the $(n-1)$-simplex (see \Cref{def:polytope}), $\occ$ has a nonzero volume in some lower dimensional space $\ell' \le |\states| \cdot |\actions| - 1$. By invoking Grunbaum's inequality (\Cref{prp:grunabum}) with $w_i$ being equal to $R_i$ projected to the $\ell'$-dimensional subspace for all agents $i \in [n]$, we have that $\vol(\occ \cap H_i) \ge (\frac{\ell'}{\ell' + 1})^{\ell'} \cdot \vol(\occ)$ where $H_i = \{\langle x - c, w_i\rangle \ge 0\} = \{\langle x, R_i\rangle \ge \ret_i(c)\}$. Since $\ell' \le \ell - 1$, we have $(\frac{\ell'}{\ell' + 1})^{\ell'} \ge  (\frac{\ell - 1}{\ell})^{\ell-1}$, 
which completes the proof.

To show tightness, take a MOMDP with a single state $\states = \{s\}$\emdash hence a constant transition function $\trans(\cdot) = s$\emdash and $\ell$ actions $\actions = \{a_1, \ldots, a_\ell\}$ and $\ell$ agents. The reward function of agent $i$ is $R_i(s, a_i) = 1$ and $0$ otherwise. The state-action occupancy polytope of this MOMDP is the $(\ell-1)$-dimensional simplex $\occ = \{\sum_{a} d_{\pi}(s, a) = 1, d_{\pi} \in \IR^{1 \times \ell}_{\ge 0}\}$. Take any point in $d_\pi \in \occ$. There exists at least one agent $i$ that has $\ret_i(d_{\pi}) = d_\pi(s, a_i) \le \frac{1}{\ell}$. Take the halfspace $H_i = \{\langle x, R_i \rangle \ge \frac{1}{\ell}\}$. Observe that $H_i \cap \occ$ is equivalent to a smaller $(\ell-1)$-dimensional simplex $\{\sum_{a} d_{\pi}(s, a) = 1 - \frac{1}{\ell}\}$ which has volume of $\vol\left((1 - \frac{1}{\ell})  \occ\right) = \left(\frac{\ell-1}{\ell}\right)^{\ell - 1} \vol(\occ)$. Therefore, $\frac{\vol(\occ \cap H_i)}{\vol(\occ)} = \left(\frac{\ell-1}{\ell}\right)^{\ell - 1}$.
\end{proof}

The centroid of the occupancy polytope, as per \Cref{thm:quantile}, attains a worst-case measure of quantile-fairness. 
However, the centroid policy can be highly suboptimal as it disregards the preferences of the agents involved.
For instance, there could exist a $99\%$-quantile fair policy. To this end, we take an egalitarian approach and aim to find a $q$-quantile fair policy with the maximum $q$.

\textbf{Max quantile fair algorithm.} \Cref{alg:quantile} searches for the optimal value $q^*$, for which a $q^*$-quantile fair policy exists, and gets close to $q^*$ by a binary search. To perform the search, we need a subroutine that checks, for a given value of $q$, if a $q$-quantile fair policy exists. Suppose we have the $q$-quantile expected return $F^{-1}_i(q)$ for all $i$, that is, the expected return amount $v_i$ such that $F_i(v_i) = q$. Then, the problem of existence of a $q$-quantile fair policy is equivalent to the feasibility of the linear program $\{x \in \occ \mid \langle x, R_i\rangle \ge F_i^{-1}(q),  i\in[n]\}$, which can be solved in polynomial time. Importantly, after finding a good approximation of $q$, there can be infinitely many policies that are $q$-quantile fair and there are various ways to select the final policy after finding the value $q$. As mentioned earlier, a desirable efficiency property is Pareto optimality; to satisfy it, one can return the policy that maximizes  the sum of agents' expected returns among the ones that are $q$-quantile fair. 
Finally, to calculate $F_i^{-1}(q)$, we can again use binary search to get $\delta$ close to the value $v_i$ for which $F_i(v_i) = q$ using $O(\log 1 / \delta)$ calls to $\volcalc$. The discussion above is summarized below.

\begin{proposition}
\label{prp:max-quantile}
Assuming an optimal oracle for $F^{-1}_i$, \Cref{alg:quantile} finds a $q$-quantile fair policy that is $\epsilon$ close to the optimal value in polynomial time with $O(\log({1}/{\epsilon}))$ per agent calls to the oracle. A $\delta$-approximation to $F^{-1}_i(q)$ can be computed using $O(\log (1/\delta))$ calls to $\volcalc$.
\end{proposition}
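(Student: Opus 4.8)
The statement has two parts\emdash an outer binary search over $q$ assuming an exact $F_i^{-1}$ oracle, and the implementation of that oracle by an inner binary search that only calls $\volcalc$\emdash and I would prove them in that order.

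\textbf{Outer loop.} The crucial structural fact is monotonicity: each $F_i$ is a cdf, hence non-decreasing, so $F_i^{-1}$ is non-decreasing, and therefore $q \le q'$ implies $\occ_{q'} \subseteq \occ_q$; consequently the predicate \texttt{q-quantile-feasible} is monotone in $q$, which is exactly what makes binary search correct. Two things then need checking. First, the set of feasible $q$ is nonempty with an attained maximum $q^\ast$: nonemptiness is immediate from \Cref{thm:quantile}, since the centroid of $\occ$ is $q$-quantile fair for $q = (\frac{\ell-1}{\ell})^{\ell-1} \ge 1/e$, and that the maximum is attained follows because, after the normalization of \Cref{sec:occ-polytope}, $f_i$ is supported on $[0,1]$ and unimodal (Brunn--Minkowski), so $F_i$ is continuous and strictly increasing on $[0,1]$, hence $F_i^{-1}$ is continuous, $\occ_q$ varies continuously with $q$, and a compactness argument gives $\occ_{q^\ast}\neq\emptyset$. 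Second, complexity: running binary search on $[0,1]$ for $\lceil\log_2(1/\epsilon)\rceil$ rounds produces an interval $[q_{\mathrm{lo}},q_{\mathrm{hi}}]$ of width at most $\epsilon$ with $q_{\mathrm{lo}}$ feasible and $q_{\mathrm{lo}}\ge q^\ast-\epsilon$; each round uses one $F_i^{-1}$ call per agent to assemble the $n$ linear constraints defining $\occ_q$ and then one polynomial-size LP feasibility test (the description of $\occ$ in \Cref{def:polytope} has size polynomial in $|\states|,|\actions|$). Returning the point of $\occ_{q_{\mathrm{lo}}}$ maximizing $\sum_i\langle x,R_i\rangle$\emdash one more LP\emdash preserves $q_{\mathrm{lo}}$-quantile fairness and yields Pareto optimality by the argument already given for \Cref{alg:pvc}.

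\textbf{Inner oracle.} Since $F_i(v)=\vol(\occ\cap\{\langle x,R_i\rangle\le v\})/\vol(\occ)$, a single evaluation of $F_i$ costs one call to $\volcalc$ (computing $\vol(\occ)$ once up front). After normalization the target $v_i:=F_i^{-1}(q)$ lies in $[0,1]$ and, by the continuity and strict monotonicity of $F_i$ on $[0,1]$ noted above, binary search on $v\in[0,1]$ that maintains a bracket $[v_{\mathrm{lo}},v_{\mathrm{hi}}]$ with $F_i(v_{\mathrm{lo}})\le q\le F_i(v_{\mathrm{hi}})$ and tests the midpoint each round shrinks the bracket to width at most $\delta$ after $\lceil\log_2(1/\delta)\rceil$ rounds, so any point of the final bracket is a $\delta$-approximation to $v_i$; the total cost is $O(\log(1/\delta))$ calls to $\volcalc$.

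\textbf{Where the content is.} Two points deserve care. The first is making sure the properties invoked in the outer loop\emdash continuity and strict monotonicity of $F_i$, compactness of its support, and hence that the two binary searches genuinely bracket $q^\ast$ and $v_i$\emdash really are consequences of unimodality of $f_i$ together with the $[0,1]$-normalization, rather than being tacitly assumed. The second is that the call counts treat $\volcalc$ as exact, whereas it is only an FPRAS: a fully rigorous version must propagate the oracle's multiplicative error through each comparison of $F_i(v)$ with $q$, which in turn caps the achievable precision $\delta$ in terms of that error and a bound on $f_i$. Neither is difficult, but this is where the otherwise routine argument has substance; I expect the compactness/continuity bookkeeping that underlies the outer search to be the main thing to get right.
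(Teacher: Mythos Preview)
Your proposal is correct and follows essentially the same approach as the paper: the paper's ``proof'' is the discussion paragraph preceding the proposition, which sketches exactly the outer binary search over $q$ via LP feasibility and the inner binary search on $v$ via $\volcalc$ that you spell out. You have added rigor the paper omits\emdash the monotonicity of the feasibility predicate, the continuity/compactness argument that $q^\ast$ is attained, and the caveat about propagating FPRAS error\emdash but the underlying argument is identical.
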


\section{Policy Aggregation with Voting Rules}
\label{sec:voting-rules}
In this section, we adapt existing voting rules from the discrete setting to policy aggregation and discuss their computational complexity.

\textbf{Plurality.} The plurality winner is the policy that achieves the maximum number of plurality votes or ``approvals,'' where agent  $i$ approves a policy $\pi$ if it achieves their  maximum expected return $\ret_i(\pi) = \max_{\pi'} \ret_i(\pi') = 1$. 
Hence the plurality winner is a policy in $\argmax_{\pi} \sum_{i \in [n]} \I[\ret_i(\pi) = 1]$.
This formulation does not require the volumetric interpretation. However, in contrast to the discrete setting where one can easily count the approvals for all candidates, 
we show that solving this problem in the context of policy aggregation is not only $\mathbf{NP}$-hard, but hard to approximate up to factor of a 
$1/n^{1/2-\epsilon}$. We establish the hardness of approximation by a reduction from the \emph{maximum independent set} problem \cite{haastad1999clique}; we defer the proof to \Cref{app:proofs}.

\begin{theorem}
\label{thm:plurality-hardness}
For any fixed $\epsilon \in (0, 1)$, there is no polynomial-time $\frac{1}{n^{1/2-\epsilon}}$-approximation algorithm for the maximum plurality score unless $\mathbf{P}=\mathbf{NP}$.
\end{theorem}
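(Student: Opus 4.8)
The plan is to reduce from \emph{maximum independent set}, which by \cite{haastad1999clique} (together with its standard derandomization) admits no polynomial-time $\frac{1}{n^{1-\epsilon}}$-approximation unless $\mathbf{P}=\mathbf{NP}$. Given a graph $G=(V,E)$ with $|V|=n$, I will construct, in polynomial time, a MOMDP $M_G$ with exactly $n$ agents (one per vertex) whose maximum plurality score equals the independence number $\alpha(G)$, and such that any policy $\pi$ witnessing plurality score $k$ yields an independent set of size $k$ that can be read off in polynomial time. Since the reduction preserves the parameter ($n$ agents from an $n$-vertex graph) and $\frac{1}{n^{1/2-\epsilon}}\ge\frac{1}{n^{1-\epsilon}}$, a $\frac{1}{n^{1/2-\epsilon}}$-approximation for maximum plurality score would in particular $\frac{1}{n^{1-\epsilon}}$-approximate $\alpha(G)$, contradicting the hardness above.

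\textbf{The gadget.} Let $\states=\{s_0\}\cup\{s_e:e\in E\}$ and $\actions=\{a_1,a_2\}$. From $s_0$, every action transitions uniformly to $\{s_e:e\in E\}$; from every edge state $s_e$, every action transitions back to $s_0$. The point of this is that the induced Markov chain on states does not depend on the policy at all, so in the average-reward case the state-occupancy is fixed and assigns a constant mass $\beta>0$ to every $s_e$ (and likewise for discounted occupancies, with $\init$ supported on $s_0$ and any $\gamma\in(0,1)$); only the action distributions $\pi(\cdot\mid s_e)$ remain free. For an edge $e=\{u,v\}$ with $u<v$, read $a_1$ at $s_e$ as ``select $u$'' and $a_2$ at $s_e$ as ``select $v$''. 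Agent $w$ receives reward $\frac{1}{\beta\,\deg(w)}$ on every pair $(s_e,a)$ for which $e\ni w$ and $a$ selects $w$, and $0$ on all other pairs. Then $\ret_w(\pi)=\frac{1}{\deg(w)}\sum_{e\ni w}\Pr[\text{select }w\text{ at }s_e]\le 1$, with equality exactly when $\pi$ selects $w$ at every edge incident to $w$; after the normalization of \Cref{sec:occ-polytope} each agent's optimal value is $1$, matching the plurality definition. (Isolated vertices, if present, can be removed at the outset since they lie in every maximum independent set, or given a private one-shot state rewarding only their own agent.)

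\textbf{Correspondence.} The crux is: a set $S\subseteq V$ of agents can be \emph{simultaneously} satisfied (i.e.\ $\ret_w(\pi)=1$ for all $w\in S$) if and only if $S$ is an independent set of $G$. For the forward direction, if $u,v\in S$ and $e=\{u,v\}\in E$ then $\pi$ would have to select $u$ at $s_e$ with probability $1$ and also select $v$ at $s_e$ with probability $1$; since selecting $u$ and selecting $v$ are distinct actions this is impossible---and this is precisely the place where allowing \emph{stochastic} policies gives no extra power, as the two demands are complementary. Conversely, if $S$ is independent, the policy that at each $s_e=\{u,v\}$ selects whichever of $u,v$ lies in $S$ (breaking ties or the empty case arbitrarily) satisfies every $w\in S$. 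Hence the maximum plurality score of $M_G$ is $\max_{S\text{ independent}}|S|=\alpha(G)$, and given any $\pi$ the set $\{i:\ret_i(\pi)=1\}$ is independent.

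\textbf{Putting it together and the hard part.} $M_G$ has $n$ agents, $|E|+1=O(n^2)$ states and two actions, so the reduction is polynomial. A polynomial-time $\frac{1}{n^{1/2-\epsilon}}$-approximation for maximum plurality score, applied to $M_G$, returns a policy whose satisfied set is an independent set of size at least $\frac{1}{n^{1/2-\epsilon}}\alpha(G)\ge\frac{1}{n^{1-\epsilon}}\alpha(G)$, contradicting the inapproximability of maximum independent set; thus no such algorithm exists unless $\mathbf{P}=\mathbf{NP}$ (in fact the same argument yields the stronger factor $\frac{1}{n^{1-\epsilon}}$). I expect the only real obstacle to be designing the gadget so that the induced chain is genuinely policy-independent---so that ``$\ret_w(\pi)=1$'' has the clean meaning ``$\pi$ selects $w$ at all its incident edges''---and double-checking that stochastic policies cannot simultaneously appease conflicting agents; the polynomial-size bookkeeping and the comparison of approximation factors are routine.
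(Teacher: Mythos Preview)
Your proposal is correct and follows essentially the same approach as the paper: both give an approximation-preserving reduction from maximum independent set with one agent per vertex, one state per edge, two actions (``select the first endpoint'' / ``select the second endpoint''), and a transition structure chosen so that state occupancies are policy-independent; the paper uses a fully connected transition kernel $\trans(s,a,s')=\tfrac{1}{|\states|}$ rather than your hub-and-spoke gadget, but this is a cosmetic difference. One minor remark: since the theorem only claims the $\tfrac{1}{n^{1/2-\epsilon}}$ bound under $\mathbf{P}=\mathbf{NP}$, you can cite H{\aa}stad's result directly at that factor and skip the detour through the derandomized $\tfrac{1}{n^{1-\epsilon}}$ bound.
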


Nevertheless, we can compute plurality in practice, as we discuss below. 

\textbf{$\alpha$-approval.} We extend the $k$-approval rule using the volumetric interpretation of the occupancy polytope, similarly to the $q$-quantile fairness definition. For some $\alpha \in [0, 1]$, agents approve a policy $\pi$ if its return is among their top $\alpha$ fraction of $\occ$, i.e., $F_i(\ret_i(\pi)) \ge \alpha$. The $\alpha$-approval winner is a policy that has the highest number of $\alpha$-approvals, so it is in $\argmax_{\pi} \sum_{i \in [n]} \I[F_i(\ret_i(\pi)) \ge \alpha]$. Note that plurality is equivalent to $1$-approval. It is worth mentioning that there can be infinitely many policies that have the maximum approval score and, to avoid a suboptimal decision, one can return a Pareto optimal solution among the set of $\alpha$-approval winner policies.

\Cref{thm:quantile} shows that for $\alpha \le 1/e$, there always exists a policy that all agents approve, and by \Cref{prp:max-quantile} such policies can be found in polynomial time, assuming access to an oracle for volumetric computations. Therefore, the problem of finding an $\alpha$-approval winner is ``easy'' for $\alpha \in (0, 1/e)$. In sharp contrast, for $\alpha = 1$\emdash namely, plurality\emdash \Cref{thm:plurality-hardness}  gives a hardness of approximation. The next theorem shows the hardness of computing $\alpha$-approval for $\alpha \in (7/8, 1]$ via a reduction from the MAX-2SAT problem. We defer the proof to \Cref{app:proofs}. 

\begin{theorem}
\label{thm:approval-hardness}
For $\alpha \in (7/8, 1]$, 
computing a policy with the highest $\alpha$-approval score is $\mathbf{NP}$-hard. This even holds for binary reward vectors and when every $F_i$ has a closed form.
\end{theorem}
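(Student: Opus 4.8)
The plan is a polynomial‑time reduction from the NP‑complete decision version of MAX‑2SAT: given a $2$‑CNF formula $\phi$ over variables $x_1,\dots,x_N$ with clauses $C_1,\dots,C_m$ and a target $k$, decide whether some assignment satisfies at least $k$ clauses. Fixing $\alpha\in(7/8,1)$, I would build a MOMDP $M_\phi$ together with one agent per clause so that the maximum $\alpha$‑approval score over all policies equals $\optsat(\phi)$; the endpoint $\alpha=1$ is exactly plurality and is already covered by \Cref{thm:plurality-hardness}. The two things that make the reduction ``nice'' (binary rewards, closed‑form $F_i$) fall out of how the occupancy polytope is built.

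\textbf{The gadget.} For each variable $x_j$ I would use a ``variable gadget'': a state $s_j$ whose actions are partitioned into a block tagged ``$x_j=\mathrm{true}$'' and a block tagged ``$x_j=\mathrm{false}$'' (plus, as discussed below, some extra dummy/padding actions), wired into a single cycle so that in the average‑reward occupancy polytope every $s_j$ is visited with the same frequency. Then $\occ$ is affinely a product of scaled simplices, one per gadget, so the normalized occupancy mass of $s_j$ on any fixed block of actions has, under the uniform distribution on $\occ$, a Beta/Irwin--Hall‑type marginal, and any finite sum of independent such coordinates has a piecewise‑polynomial c.d.f.\ — giving every $F_i$ an explicit closed form. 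For clause $C=\ell_1\vee\ell_2$ I would create an agent $i_C$ whose reward is the $\{0,1\}$‑indicator of the state--action pairs witnessing ``$\ell_1$ true'' and ``$\ell_2$ true'', together with the true‑blocks of a constant number of padding gadgets private to $C$ (whose visit frequencies can be set to arbitrary rationals through the transition probabilities). Thus $\ret_{i_C}(\pi)$ equals, up to the common normalization, the sum of the ``truth degrees'' of $\ell_1$, $\ell_2$, and $C$'s padding variables; a deterministic policy encoding an assignment drives each literal's truth degree to $0$ or $1$, a padding‑maximizing policy drives each padding degree to $1$, and the calibration target is to choose the padding (and the dummy blocks, which skew each marginal toward $0$) so that the threshold $\theta_C:=F_{i_C}^{-1}(\alpha)$ satisfies: (i) a deterministic policy whose assignment satisfies $C$ attains $\ret_{i_C}\ge\theta_C$ (with equality when only one literal is satisfied), while an assignment falsifying $C$ keeps $\ret_{i_C}$ strictly below $\theta_C$; and (ii) any policy with $\ret_{i_C}(\pi)\ge\theta_C$ has at least one literal of $C$ whose truth degree exceeds its competing ``false'' degree. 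The point of the dummy/padding blocks is exactly to push the quantile of a ``one literal satisfied'' return up into the top $\alpha$‑fraction, which is why $\alpha$ must be large; $7/8$ is the threshold above which a bounded amount of padding enforces (i)–(ii) simultaneously for clauses with zero, one, and two negated literals, with the rewards staying $\{0,1\}$‑valued and each $F_i$ staying an explicit piecewise polynomial.

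\textbf{Correctness, both directions.} Given an assignment $\sigma$, the deterministic policy that plays in each gadget $s_j$ an action of the $\sigma(x_j)$‑block (and a true‑block action in every padding gadget) is $\alpha$‑approved by $i_C$ exactly when $\sigma$ satisfies $C$, by (i); hence its $\alpha$‑approval score equals the number of clauses $\sigma$ satisfies, so $\optsat(\phi)\le$ (max $\alpha$‑approval score). Conversely, from any policy $\pi$ I would read off the assignment $\sigma_\pi$ that sets $x_j$ true iff the true‑degree of $s_j$ exceeds its false‑degree; by (ii) every clause whose agent $\alpha$‑approves $\pi$ has a literal satisfied by $\sigma_\pi$, so $\sigma_\pi$ satisfies at least as many clauses as the $\alpha$‑approval score of $\pi$, giving the reverse inequality. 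Therefore the maximum $\alpha$‑approval score equals $\optsat(\phi)$, and since the padding is of constant size for fixed $\alpha$, $M_\phi$ has size polynomial in $|\phi|$; the rewards are binary and each $F_i$ is an explicit piecewise polynomial, which also yields the two ``even holds'' addenda.

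\textbf{Main obstacle.} The technical heart is the joint calibration behind (i) and (ii): a clause satisfied by a single deterministic literal has the same raw return as a stochastic policy that splits mass evenly between a literal and its negation, so the naive threshold $\theta_C=F_{i_C}^{-1}(\alpha)$ is forced to sit exactly at that value and leaves no slack; getting a genuine separation — uniformly over all three clause polarities, without breaking closed‑form‑ness or binarity of the rewards — is what forces the extra padding/dummy structure and is where the constant $7/8$ comes from. Everything else (computing occupancy polytopes of cycle MDPs, verifying that the skewed marginals give the claimed quantile, checking the rounding) is routine once the gadget is in place.
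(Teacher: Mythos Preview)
Your reduction source (MAX-2SAT) and the rounding idea are right, but the one-agent-per-clause construction cannot satisfy your conditions (i) and (ii) simultaneously. The return of a clause-agent with binary reward is linear in the occupancy, so a deterministic policy with exactly one literal satisfied (contribution $1+0$) and a stochastic policy that puts mass $\tfrac12$ on each literal's block and $\tfrac12$ on its negation (contribution $\tfrac12+\tfrac12$) have the \emph{same} return. Padding and dummy actions do not help: padding shifts the threshold and dummies reshape the marginal $F_i$, but neither changes the set of achievable return vectors, which is what governs (i) and (ii). Concretely, for $C=x_1\lor x_2$ your (i) forces $\theta_C$ to equal the return at $(t_1,t_2)=(1,0)$; then the policy with $(t_1,t_2,f_1,f_2)=(\tfrac12,\tfrac12,\tfrac12,\tfrac12)$ also meets $\theta_C$ while neither truth degree strictly exceeds its false degree, violating (ii). You correctly flag this as the ``main obstacle,'' but the sketched gadget does not resolve it, and your derivation of the constant $7/8$ is left as an assertion.

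The paper's missing idea is to use \emph{three} agents per clause rather than one --- one agent for each of the three satisfying partial assignments $(c_{i,1},c_{i,2})\in\{(\btrue,\btrue),(\btrue,\bfalse),(\bfalse,\btrue)\}$ --- with each agent's binary reward indicating the two state-action pairs matching its pattern. A deterministic assignment satisfying $C_i$ then gives exactly one of the three agents return $2$ (both of its rewarded actions are taken) and the other two return $1$; since any two of the three agents disagree on one coordinate, at most one can ever be approved. The approval threshold can therefore sit strictly above $3/2$, which forces \emph{both} coordinates above $1/2$ and makes the rounding unambiguous. No padding or dummies are needed: the MOMDP is fully connected with one state per variable and two actions, each agent's return is the sum of two independent $\mathrm{Unif}[0,1]$ variables under the uniform measure on $\occ$, and $F_i(3/2)=1-\tfrac12(2-\tfrac32)^2=7/8$ exactly --- that is where the constant comes from.
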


Given the above hardness result, to compute the $\alpha$-approval rule, we turn to \emph{mixed-integer linear programming (MILP)}. \Cref{alg:approvals-milp} simply creates $n$ binary variables for each agent $i$ indicating whether $i$ $\alpha$-approves the policy, i.e., $F_i(\ret_i(\pi)) \ge \alpha$ which is equivalent to $\ret_i(\pi) \ge F^{-1}_i(\alpha)$. To encode the expected return requirement for agent $i$ to approve a policy as a linear constraint, we precompute $F^{-1}_i(\alpha)$. This can be done by a binary search  similar to
\Cref{alg:quantile}. Importantly, \Cref{alg:approvals-milp} has one binary variable per agent and only $n$ constraints which is key to its practicability.

\begin{figure}[t]
    \centering
    \begin{minipage}{0.4\textwidth}
    \begin{algorithm}[H]
    \SetAlgoNoEnd	
    \SetAlgoLined
    \DontPrintSemicolon
    \caption{$\alpha$-Approvals MILP}
    compute $F^{-1}_i(\alpha)$ for all $i \in [n]$\;
    solve the mixed integer linear program
    \setlength{\abovedisplayskip}{0pt}
    \setlength{\belowdisplayskip}{0pt}
    \begin{align*}    
    \max \,\, \sum\nolimits_{i \in [n]} & \, a_{i}
    \\
    \text{s.t.}\quad a_{i} \cdot F_i^{-1}(\alpha)
    & \le \langle d_\pi, R_i \rangle & \forall i \in [n]
    \\
    d_{\pi} \in \occ,\quad a_{i} & \in \{0, 1\}& \forall i \in [n]
    \end{align*}
    \;
    \Return a Pareto optimal policy subject to max $\alpha$-approval $\{i \mid a_i = 1\}$
    \label{alg:approvals-milp}
    \end{algorithm}
    \end{minipage}
    \hfill\begin{minipage}{0.58\textwidth}
    \begin{algorithm}[H]
    \SetAlgoNoEnd
    \SetAlgoLined
    \DontPrintSemicolon
    \setstretch{1.16}
    \caption{$\epsilon$-Borda count MILP}
    compute $F^{-1}_i(k \epsilon)$ for all $i \in [n]$ and $k \in [\frac{1}{\epsilon}]$\;
    solve the mixed integer linear program
    \setlength{\abovedisplayskip}{0pt}
    \setlength{\belowdisplayskip}{0pt}
    \begin{align*}    
    \max \,\, \sum_{i \in [n]} \sum_{k \in [1/\epsilon]} & \, a_{i, k} \cdot \left(F_i(k\epsilon) - F_i((k - 1)\epsilon)\right)
    \\
    \text{s.t.}\qquad a_{i, k} \cdot k\epsilon
    &\le \langle  d_\pi, R_i \rangle \quad \,\,\forall i \in [n]
    \\
     d_{\pi} \in \occ,&\quad a_{i, k} \in \{0, 1\}, \qquad \forall i \in [n], k \in [1/\epsilon]
    \end{align*}
    \;
    \Return a Pareto optimal policy subject to max Borda score
    \label{alg:borda-milp}
    \end{algorithm}
    \end{minipage}
\vspace{-0.75\baselineskip}
\end{figure}

\textbf{Borda count.} The Borda count rule also has a natural definition in the continuous setting. In the discrete setting, the Borda score of agent $i$ for alternative $c$ is the number of alternatives $c'$ such that $c \succ_i c'$. In the continuous setting, $F_i(\ret_i(\pi))$ indicates the volume of the occupancy polytope to which agent $i$ prefers $\pi$. The Borda count rule then selects a policy among $\argmax_{\pi} \sum_{i \in [n]} F_i(\ret_i(\pi))$. 

The computational complexity of the Borda count rule remains an interesting open question, though we make progress on two fronts.\footnote{We suspect the problem to be $\mathbf{NP}$-hard since the objective resembles a summation  of ``sigmoidal'' functions over a convex domain, which is known to be $\mathbf{NP}$-hard \cite{udell2013maximizing}.} First, we identify a sufficient condition under which we can find an approximate max Borda count policy using convex optimization in polynomial time. Second, similar to \Cref{alg:approvals-milp}, we present a MILP to approximate the Borda count rule in \Cref{alg:borda-milp}. 

The first is based on the observation in \Cref{sec:occ-polytope} that $F_i$ is concave in range $[\mode(f_i), \infty)$. We assume that the max Borda count policy $\pi$ appears in the concave portion of all agents, i.e., $\ret_i(\pi) \ge \mode(f_i)$ for all $i \in [n]$. 
Then, the problem becomes a maximization of the concave objective $\max \sum_{i} F_i(\langle d_{\pi} , R_i \rangle)$ over the convex domain $\{ d_\pi \in \occ \mid  \langle d_\pi, R_i \rangle  \ge \mode(f_i), \forall i \in [n]\}$.

Second,  \Cref{alg:borda-milp} is a MILP that finds an approximate max Borda count policy. As a pre-processing step, we estimate $F_i$ for each agent $i$ separately. We measure $F_i$ for the fixed expected return values of $\{\epsilon, 2\epsilon, \ldots, 1-\epsilon, 1\}$. This accounts for $1/\epsilon$ oracle calls to $\volcalc$ per agent. Then, for the MILP, we introduce ${1}/{\epsilon}$ binary variables for each agent indicating their $\epsilon$-rounded return levels, i.e., $a_{i, k} = 1$ iff $\langle d_{\pi}, R_i\rangle \ge k \epsilon$ for $k\in [1/\epsilon]$. The MILP then searches for an occupancy measure $d_\pi \in \occ$ with the maximum total Borda score among the $\epsilon$-rounded expected return vectors, where the $\epsilon$-rounded value of $x \in \IR_{\ge 0}$ is defined as $\lfloor x / \epsilon \rfloor \cdot \epsilon$.
Therefore, the MILP finds a policy whose Borda score is at least as high as the maximum Borda score among $\epsilon$-rounded return vectors. This is \emph{not} necessarily equivalent to a $(1-\epsilon)$ approximation of the max Borda score.

Finally, we make a novel connection between $q$-quantile fairness and Borda count in \Cref{lem:borda}. 
\begin{theorem}
\label{lem:borda}
A $q$-quantile fair policy is a $q$-approximation of the maximum Borda score.
\end{theorem}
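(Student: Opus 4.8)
The plan is to combine two elementary bounds: a lower bound on the Borda score of a $q$-quantile fair policy, and a trivial upper bound on the maximum possible Borda score. Recall that the Borda score of a policy $\pi$ is $\sum_{i \in [n]} F_i(\ret_i(\pi))$.

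First I would invoke \Cref{def:quantile-fair}: if $\pi$ is $q$-quantile fair, then for every agent $i \in [n]$ we have $\vol(\occ \cap \{\langle x, R_i\rangle \le \ret_i(\pi)\}) \ge q \cdot \vol(\occ)$, which is exactly the statement $F_i(\ret_i(\pi)) \ge q$ by \Cref{def:cdf-pdf}. Summing over the $n$ agents gives that the Borda score of $\pi$ is at least $nq$.

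Next, for the upper bound, note that for every policy $\pi'$ and every agent $i$ we have $F_i(\ret_i(\pi')) \le 1$ since $F_i$ is a cdf, so the maximum Borda score $\max_{\pi'} \sum_{i \in [n]} F_i(\ret_i(\pi'))$ is at most $n$. Combining the two bounds yields that the Borda score of $\pi$ is at least $nq \ge q \cdot \max_{\pi'} \sum_{i \in [n]} F_i(\ret_i(\pi'))$, i.e., $\pi$ is a $q$-approximation of the maximum Borda score.

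I do not anticipate any real obstacle here; the only subtlety worth making explicit is that the definition of $q$-quantile fairness is literally a per-agent lower bound on the same quantity $F_i(\ret_i(\cdot))$ that the Borda objective sums, so the argument is essentially just ``$n$ terms each at least $q$, against a total that is at most $n$.'' One could optionally remark that the bound is in general not improvable without further assumptions, since instances exist where the max Borda score is close to $n$ while the best quantile-fair guarantee is close to $q$.
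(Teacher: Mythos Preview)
Your proposal is correct and matches the paper's proof essentially line for line: both argue that $F_i(\ret_i(\pi)) \ge q$ for each $i$ gives $\borda(\pi) \ge qn$, that $\max_{\pi'} \borda(\pi') \le n$ since each $F_i \le 1$, and divide. There is nothing to add.
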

\begin{proof}
Recall the Borda score of a policy $\pi$ is defined as $\borda(\pi) = \sum_{i \in [n]} F_i(\ret_i(\pi))$. Since $F_i(v) \in [0, 1]$ for all $v \in \IR$, we have $\max_{\pi} \sum_{i \in [n]} F_i(\ret_i(\pi)) \le n$. 

By definition of a $q$-quantile fair policy $\pi_q$, $F_i(\ret_i(\pi_q)) \ge q$ for all $i \in [n]$. Therefore, 
\[
\borda(\pi_q) = \sum_{i \in [n]} F_i(\ret_i(\pi_q)) \ge qn,
\]
and thus we obtain
\[
\frac{\borda(\pi_q)}{\max_\pi \borda(\pi)} \ge \frac{qn}{n} = q.
\qedhere
\]

\end{proof}

A corollary of \Cref{lem:borda,thm:quantile} is that the policy returned by $\epsilon$-max quantile fair algorithm (\Cref{alg:quantile}) achieves a $(1/e - \epsilon)$ multiplicative approximation of the maximum Borda score.

\section{Experiments}
\label{sec:experiments}

\newcommand{\normalp}{\mathrm{norm}}
\newcommand{\riskyp}{\mathrm{risk}}
\newcommand{\incident}{\mathrm{inc}}
\textbf{Environment.} We adapt the dynamic attention allocation environment introduced by \citet{d2020fairness}. We aim to monitor several sites and prevent potential incidents, but limited resources prevent us from monitoring all sites at all times; this is inspired by applications such as food inspection and pest control \footnote{The code for the experiments is available at \href{https://github.com/praal/policy-aggregation}{https://github.com/praal/policy-aggregation}.}. There are $m=5$ warehouses and each can be in 3 different stages: normal ($\normalp$), risky ($\riskyp$) and incident ($\incident$). There are $|\states| = 3^m$ states containing all possible stages of all warehouses. In each step, we can monitor at most one site, so there are $m+1$ actions, where action $m+1$ is no operation and action $i \leq m$ is monitoring warehouse $i$.
There are $n$ agents; each agent $i$ has a list $\ell_i$ of warehouses that they consider valuable and a reward function $R_i$. 
In each step $t$, $R_i(s_t,a_t) = -\I[a_t \leq m] - \sum_{j \in \ell_i} \rho_i w_j \cdot \I[s_{t,j} = \incident \land  a_t \neq j]$, where $w_j \in \{100, 150, \cdots, 250\}$ denotes the penalty of an incident occurring in warehouse $j$, $\rho_i$ is the scale of penalties for agent $i$ which is sampled from $\{0.25, 0.5, \cdots, n\}$, and $-1$ is the cost of monitoring. In each step, if we monitor warehouse $j$, its stage becomes normal. If not, it changes from $\normalp$ to $\riskyp$ and from $\riskyp$ to $\incident$ with probabilities $p_{j, \riskyp}$ and $p_{j, \incident}$, and stays the same otherwise. Probabilities are sampled i.i.d. uniformly from $[0.5, 0.8]$. The state transitions $\trans$ is the product of the warehouses' stage transitions.

\textbf{Rules.} We compare the outcomes of policy aggregation with different rules: \emph{max-quantile, Borda, $\alpha$-approval} ($\alpha=0.9, 0.8$), \emph{egalitarian} (maximize minimum return) and \emph{utilitarian} (maximize sum of returns).
We sample $5 \cdot 10^5$ random policies based on which we fit a generalized logistic function to estimate the cdf of the expected return distribution $F_i$ (\Cref{def:cdf-pdf}) for every agent. The policies for $\alpha$-approval voting rules are optimized with respect to maximum utilitarian welfare. The egalitarian rule finds a policy that maximizes the expected return of the worst-off agent, then optimizes for the second worst-off agent, and so on. The implementation details of Borda count are in \Cref{app:exp-details}.

\textbf{Results.} In \Cref{fig:comparison}, we report the normalized expected return of agents as $\frac{\ret_i(\pi) - \min_{\pi} \ret_i(\pi)}{\max_{\pi'} \ret_i(\pi') - \min_{\pi''} \ret_i(\pi'')}$ (sorted from lowest to highest) which are averaged over $10$ different environment and agents instances. We observe that the utilitarian and egalitarian rules are sensitive to the different agents' reward scales and tend to perform unfairly. The utilitarian rule achieves the highest utilitarian welfare by almost ignoring one agent. The egalitarian rule achieves higher return for the worst-off agents compared to the utilitarian rule, but still yields an inequitable outcome.
The max-quantile rule tends to return the fairest outcomes with similar normalized returns for the agents. The Borda rule, while not a fair rule by design, tends to find fair outcomes which are slightly worse than the max-quantile rule. The $\alpha$-approval rule with max utilitarian completion tends to the utilitarian rule as $\alpha \to 0$ and to plurality as $\alpha \to 1$. Importantly, although not shown in the plots, the plurality rule ignores almost all agents and performs optimally for a randomly selected agent. 

In addition to the fine-grained utility distributions, in \Cref{tab:results}, we report two aggregate measures based on agents' utilities: (i) the Gini index, a statistical measure of dispersion defined as $\frac{\sum_{i \in \agents} \sum_{j \in \agents} |\ret_i(\pi) - \ret_j(\pi)|}{2n \sum_{i \in \agents} \ret_i(\pi)}$\emdash where a lower Gini index indicates a more equitable distribution, and (ii) the Nash welfare, defined as the geometric mean of agents' utilities $\left(\prod_{i \in \agents} \ret_i(\pi)\right)^{1/n}$\emdash where a higher Nash welfare is preferable. We observe a similar trend as above, where utilitarian and egalitarian rules perform worse across both metrics. For the other four rules, the Nash welfare scores are comparable in both scenarios, with Borda showing slightly better performance.  The Gini index, however, highlights a clearer distinction among the rules, with max-quantile performing better.

\begin{figure}[t]
    \centering
    \begin{subfigure}[b]{0.55\textwidth}
        \centering
        \includegraphics[width=\textwidth]{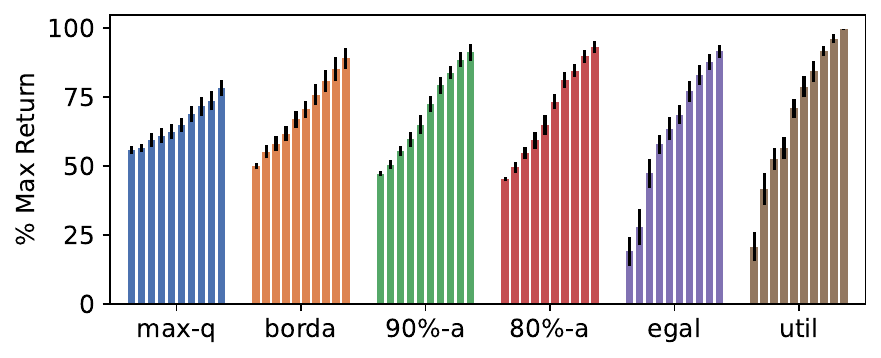}
        \caption{10 agents consider a random subset of warehouses valuable}
        \label{fig:first}
    \end{subfigure}
    \hfill
    \begin{subfigure}[b]{0.43\textwidth}
        \centering
        \includegraphics[width=\textwidth]{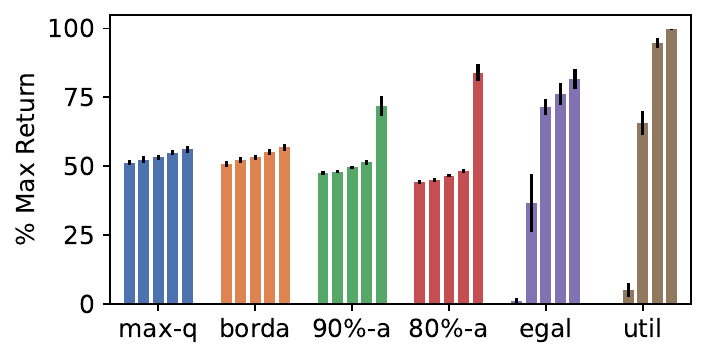}
        \caption{5 symmetric agents, one per warehouse}
        \label{fig:second}
    \end{subfigure}
    \caption{Comparison of policies optimized by different rules in two different scenarios based on the normalized expected return for agents. The bars, grouped by rule, correspond to agents sorted based on their normalized expected return. The error bars show the standard error of the mean. 
    }
    \label{fig:comparison}
\end{figure}

\begin{table}[t]
\small
\renewcommand{\arraystretch}{1.4}
\centering
\begin{tabular}{c|c|c|c|c|}
\cline{2-5}
\multirow{2}{*}{\textbf{}}
& \multicolumn{2}{c|}{{5 symmetric agents, one per warehouse}} & \multicolumn{2}{c|}{{10 agents, random subsets of warehouses}} \\ \cline{1-5} 
\multicolumn{1}{|c|}{\textbf{Rules}} & \textbf{Gini index} & \textbf{Nash welfare} & \textbf{Gini index} & \textbf{Nash welfare} \\ \hline
\multicolumn{1}{|c|}{egalitarian}        & $0.2864 \pm 0.0295$ & $0.2208 \pm 0.0717$ & $0.2126 \pm 0.0209$ & $0.4655 \pm 0.0581$ \\
\multicolumn{1}{|c|}{utilitarian} & $0.4392 \pm 0.0094$ & $0.0502 \pm 0.0174$ & $0.2020 \pm 0.0182$ & $0.5736 \pm 0.0471$ \\
\multicolumn{1}{|c|}{80\%-approvals}      & $0.1233 \pm 0.0047$ & $0.5186 \pm 0.0051$ & $0.1352 \pm 0.0037$ & $0.6741 \pm 0.0200$ \\
\multicolumn{1}{|c|}{90\%-approvals}      & $0.0793 \pm 0.0056$ & $0.5286 \pm 0.0053$ & $0.1257 \pm 0.0034$ & $0.6746 \pm 0.0211$ \\
\multicolumn{1}{|c|}{Borda}       & $0.0225 \pm 0.0024$ & $0.5356 \pm 0.0062$ & $0.1029 \pm 0.0083$ & $0.6801 \pm 0.0261$ \\
\multicolumn{1}{|c|}{max-quantile}       & $0.0188 \pm 0.0022$ & $0.5355 \pm 0.0062$ & $0.0625 \pm 0.0067$ & $0.6474 \pm 0.0232$ \\ \hline
\end{tabular}
\vspace{1em}
\caption{Comparison of policies optimized by different rules in two scenarios based on Gini index and Nash welfare based on their normalized expected return averaged. We report the mean and the standard error.}
\label{tab:results}
\end{table}

\section{Discussion}

We conclude by discussing some of the limitations of our approach. A first potential limitation is computation. When we started our investigation of the policy aggregation problem, we were skeptical that ordinal solutions from social choice could be practically applied. We believe that our results successfully lay this initial concern to rest. However, additional algorithmic advances are needed to scale our approach beyond thousands of agents, states, and actions.
Additionally, an interesting future direction is to apply these rules within continuous state or action spaces, as well as in online reinforcement learning setting where the environment remains unknown.

A second limitation is the possibility of strategic behavior. The Gibbard-Satterthwaite Theorem~\cite{Gib73,Sat75} precludes the existence of ``reasonable'' voting rules that are strategyproof, in the sense that agents cannot gain from misreporting their ordinal preferences; we conjecture that a similar result holds for policy aggregation in our framework. However, if reward functions are obtained through inverse reinforcement learning, successful manipulation would be difficult: an agent would have to act in a way that the learned reward function induces ordinal (volumetric) preferences leading to a higher-return aggregate stochastic policy. This separation between the actions taken by an agent and the preferences they induce would likely alleviate the theoretical susceptibility of our methods to strategic behavior. 

\section*{Acknowledgments}

We gratefully acknowledge funding from the Natural Sciences and Engineering Research Council of
Canada (NSERC) and the Canada CIFAR AI Chairs Program (Vector Institute).
This work was also partially supported by the Cooperative AI Foundation; by the National Science Foundation under grants IIS-2147187, IIS-2229881 and CCF-2007080; and by the Office of Naval Research under grants N00014-20-1-2488 and N00014-24-1-2704.

\bibliographystyle{plainnat}
\bibliography{abb,ultimate,policy_aggr}

\begin{thebibliography}{42}
\providecommand{\natexlab}[1]{#1}
\providecommand{\url}[1]{\texttt{#1}}
\expandafter\ifx\csname urlstyle\endcsname\relax
  \providecommand{\doi}[1]{doi: #1}\else
  \providecommand{\doi}{doi: \begingroup \urlstyle{rm}\Url}\fi

\bibitem[Abbeel and Ng(2004)]{AN04}
P.~Abbeel and A.~Y. Ng.
\newblock Apprenticeship learning via inverse reinforcement learning.
\newblock In \emph{Proceedings of the 21st International Conference on Machine Learning (ICML)}, pages 1--8, 2004.

\bibitem[Alamdari et~al.(2022)Alamdari, Klassen, Toro~Icarte, and McIlraith]{alizadeh2022considerate}
P.~A. Alamdari, T.~Q. Klassen, R.~Toro~Icarte, and S.~A. McIlraith.
\newblock Be considerate: Avoiding negative side effects in reinforcement learning.
\newblock In \emph{Proceedings of the 21st International Conference on Autonomous Agents and Multiagent Systems (AAMAS)}, pages 18--26, 2022.

\bibitem[Alamdari et~al.(2024)Alamdari, Klassen, Creager, and Mcilraith]{pmlr-v235-alamdari24a}
P.~A. Alamdari, T.~Q. Klassen, E.~Creager, and S.~A. Mcilraith.
\newblock Remembering to be fair: Non-{M}arkovian fairness in sequential decision making.
\newblock In \emph{Proceedings of the 41st International Conference on Machine Learning (ICML)}, pages 906--920, 2024.

\bibitem[Babichenko et~al.(2024)Babichenko, Feldman, Holzman, and Narayan]{BFHN24}
Y.~Babichenko, M.~Feldman, R.~Holzman, and V.~V. Narayan.
\newblock Fair division via quantile shares.
\newblock In \emph{Proceedings of the 56th Annual ACM Symposium on Theory of Computing (STOC)}, pages 1235--1246, 2024.

\bibitem[Bradley and Terry(1952)]{bradley1952rank}
R.~A. Bradley and M.~E. Terry.
\newblock Rank analysis of incomplete block designs: I. the method of paired comparisons.
\newblock \emph{Biometrika}, 39\penalty0 (3/4):\penalty0 324--345, 1952.

\bibitem[Caragiannis et~al.(2016)Caragiannis, Procaccia, and Shah]{CPS16}
I.~Caragiannis, A.~D. Procaccia, and N.~Shah.
\newblock When do noisy votes reveal the truth?
\newblock \emph{ACM Transactions on Economics and Computation}, 4\penalty0 (3):\penalty0 \ article 15, 2016.

\bibitem[Chaudhury et~al.(2024)Chaudhury, Murhekar, Yuan, Li, Mehta, and Procaccia]{CMYL+24}
B.~R. Chaudhury, A.~Murhekar, Z.~Yuan, B.~Li, R.~Mehta, and A.~D. Procaccia.
\newblock Fair federated learning via the proportional veto core.
\newblock In \emph{Proceedings of the 41st International Conference on Machine Learning (ICML)}, 2024.

\bibitem[Clegg(2023)]{Clegg23}
N.~Clegg.
\newblock Bringing people together to inform decision-making on generative {AI}.
\newblock Blog post, 2023.
\newblock URL \url{https://about.fb.com/news/2023/06/generative-ai-community-forum}.

\bibitem[Conitzer and Sandholm(2005)]{CS05b}
V.~Conitzer and T.~Sandholm.
\newblock Common voting rules as maximum likelihood estimators.
\newblock In \emph{Proceedings of the 21st Annual Conference on Uncertainty in Artificial Intelligence (UAI)}, pages 145--152, 2005.

\bibitem[Conitzer et~al.(2024)Conitzer, Freedman, Heitzig, Holliday, Jacobs, Lambert, Mosse, Pacuit, Russell, Schoelkopf, Tewolde, and Zwicker]{ConitzerICML2024social}
V.~Conitzer, R.~Freedman, J.~Heitzig, W.~H. Holliday, B.~M. Jacobs, N.~Lambert, M.~Mosse, E.~Pacuit, S.~Russell, H.~Schoelkopf, E.~Tewolde, and W.~S. Zwicker.
\newblock Position: Social choice should guide {AI} alignment in dealing with diverse human feedback.
\newblock In \emph{Proceedings of the 41st International Conference on Machine Learning (ICML)}, pages 9346--9360, 2024.

\bibitem[D'Amour et~al.(2020)D'Amour, Srinivasan, Atwood, Baljekar, Sculley, and Halpern]{d2020fairness}
A.~D'Amour, H.~Srinivasan, J.~Atwood, P.~Baljekar, D.~Sculley, and Y.~Halpern.
\newblock Fairness is not static: Deeper understanding of long term fairness via simulation studies.
\newblock In \emph{Proceedings of the 3rd Conference on Fairness, Accountability, and Transparency (FAccT)}, pages 525--534, 2020.

\bibitem[Dyer et~al.(1991)Dyer, Frieze, and Kannan]{dyer1991random}
M.~Dyer, A.~Frieze, and R.~Kannan.
\newblock A random polynomial-time algorithm for approximating the volume of convex bodies.
\newblock \emph{Journal of the ACM}, 38\penalty0 (1):\penalty0 1--17, 1991.

\bibitem[Fan et~al.(2023)Fan, Peng, Tian, , and Fain]{FanAAMAS2023welfare}
Z.~Fan, N.~Peng, M.~Tian, , and B.~Fain.
\newblock Welfare and fairness in multi-objective reinforcement learning.
\newblock In \emph{Proceedings of the 22nd International Conference on Autonomous Agents and Multi-Agent Systems (AAMAS)}, pages 1991--1999, 2023.

\bibitem[Fishburn(1984)]{fishburn1984probabilistic}
P.~C. Fishburn.
\newblock Probabilistic social choice based on simple voting comparisons.
\newblock \emph{The Review of Economic Studies}, 51\penalty0 (4):\penalty0 683--692, 1984.

\bibitem[Garey et~al.(1976)Garey, Johnson, and Stockmeyer]{garey76}
M.R. Garey, D.S. Johnson, and L.~Stockmeyer.
\newblock Some simplified np-complete graph problems.
\newblock \emph{Theoretical Computer Science}, 1\penalty0 (3):\penalty0 237--267, 1976.
\newblock ISSN 0304-3975.

\bibitem[Gibbard(1973)]{Gib73}
A.~Gibbard.
\newblock Manipulation of voting schemes.
\newblock \emph{Econometrica}, 41:\penalty0 587--602, 1973.

\bibitem[Gr{\"u}nbaum(1960)]{grunbaum1960partitions}
B.~Gr{\"u}nbaum.
\newblock Partitions of mass-distributions and of convex bodies by hyperplanes.
\newblock \emph{Pacific Journal of Mathematics}, 10\penalty0 (4):\penalty0 1257--1261, 1960.

\bibitem[{Gurobi Optimization, LLC}(2023)]{gurobi}
{Gurobi Optimization, LLC}.
\newblock {Gurobi Optimizer Reference Manual}, 2023.
\newblock URL \url{https://www.gurobi.com}.

\bibitem[Hammond(1991)]{Hamm91}
P.~J. Hammond.
\newblock Interpersonal comparisons of utility: Why and how they are and should be made.
\newblock In J.~Elster and J.~E. Roemer, editors, \emph{Interpersonal Comparisons of Well-Being}, chapter~7, pages 200--254. Cambridge University Press, 1991.

\bibitem[H{\aa}stad(1999)]{haastad1999clique}
J.~H{\aa}stad.
\newblock Clique is hard to approximate within n 1- $\varepsilon$.
\newblock \emph{Acta Mathematica}, 182:\penalty0 105--142, 1999.

\bibitem[Hayes et~al.(2022)Hayes, R{\u{a}}dulescu, Bargiacchi, K{\"a}llstr{\"o}m, Macfarlane, Reymond, Verstraeten, Zintgraf, Dazeley, Heintz, et~al.]{hayes2022practical}
C.~F. Hayes, R.~R{\u{a}}dulescu, E.~Bargiacchi, J.~K{\"a}llstr{\"o}m, M.~Macfarlane, M.~Reymond, T.~Verstraeten, L.~M Zintgraf, R.~Dazeley, F.~Heintz, et~al.
\newblock A practical guide to multi-objective reinforcement learning and planning.
\newblock \emph{Autonomous Agents and Multi-Agent Systems}, 36\penalty0 (1):\penalty0 26, 2022.

\bibitem[Ianovski and Kondratev(2021)]{IK21}
E.~Ianovski and A.~Y. Kondratev.
\newblock Computing the proportional veto core.
\newblock In \emph{Proceedings of the 35th AAAI Conference on Artificial Intelligence (AAAI)}, 2021.

\bibitem[Ju et~al.(2023)Ju, Ghosh, and Shroff]{ju2023achieving}
P.~Ju, A.~Ghosh, and N.~Shroff.
\newblock Achieving fairness in multi-agent {MDP} using reinforcement learning.
\newblock In \emph{Proceedings of the 12th International Conference on Learning Representations (ICLR)}, 2023.

\bibitem[Kondratev and Ianovski(2024)]{KI24}
A.~Y. Kondratev and E.~Ianovski.
\newblock Veto core consistent preference aggregation.
\newblock In \emph{Proceedings of the 23rd International Conference on Autonomous Agents and Multi-Agent Systems (AAMAS)}, pages 1020--1028, 2024.

\bibitem[Mandal and Gan(2022)]{Mandal2022socially}
D.~Mandal and J.~Gan.
\newblock Socially fair reinforcement learning.
\newblock \emph{CoRR}, abs/2208.12584, 2022.

\bibitem[Moulin(1981)]{Moul81}
H.~Moulin.
\newblock The proportional veto principle.
\newblock \emph{The Review of Economic Studies}, 48\penalty0 (3):\penalty0 407--416, 1981.

\bibitem[Ng and Russell(2000)]{NR00}
A.~Y. Ng and S.~Russell.
\newblock Algorithms for inverse reinforcement learning.
\newblock In \emph{Proceedings of the 17th International Conference on Machine Learning (ICML)}, pages 663--670, 2000.

\bibitem[Noothigattu et~al.(2021)Noothigattu, Yan, and Procaccia]{NYP21}
R.~Noothigattu, T.~Yan, and A.~D. Procaccia.
\newblock Inverse reinforcement learning from like-minded teachers.
\newblock In \emph{Proceedings of the 35th AAAI Conference on Artificial Intelligence (AAAI)}, pages 9197--9204, 2021.

\bibitem[Ogryczak et~al.(2013)Ogryczak, Perny, and Weng]{ogryczak2013compromise}
W.~Ogryczak, P.~Perny, and P.~Weng.
\newblock A compromise programming approach to multiobjective markov decision processes.
\newblock \emph{International Journal of Information Technology \& Decision Making}, 12\penalty0 (05):\penalty0 1021--1053, 2013.

\bibitem[Ouyang et~al.(2022)Ouyang, Wu, Jiang, Almeida, Wainwright, Mishkin, Zhang, Agarwal, Slama, Ray, Schulman, Hilton, Kelton, Miller, Simens, Askell, Welinder, Christiano, Leike, and Lowe]{OWJA+22}
L.~Ouyang, J.~Wu, X.~Jiang, D.~Almeida, C.~Wainwright, P.~Mishkin, C.~Zhang, S.~Agarwal, K.~Slama, A.~Ray, J.~Schulman, J.~Hilton, F.~Kelton, L.~Miller, M.~Simens, A.~Askell, P.~Welinder, P.~F. Christiano, J.~Leike, and R.~Lowe.
\newblock Training language models to follow instructions with human feedback.
\newblock In \emph{Proceedings of the 36th Annual Conference on Neural Information Processing Systems (NeurIPS)}, 2022.

\bibitem[Puterman(2014)]{puterman2014markov}
M.~L. Puterman.
\newblock \emph{Markov Decision Processes: Discrete Stochastic Dynamic Programming}.
\newblock John Wiley \& Sons, 2014.

\bibitem[Roijers et~al.(2013)Roijers, Vamplew, Whiteson, and Dazeley]{RoijersJAIR13multiobjective}
D.~M. Roijers, P.~Vamplew, S.~Whiteson, and R.~Dazeley.
\newblock A survey of multi-objective sequential decision-making.
\newblock \emph{Journal of Artificial Intelligence Research}, 48:\penalty0 67--113, 2013.

\bibitem[Satterthwaite(1975)]{Sat75}
M.~Satterthwaite.
\newblock Strategy-proofness and {A}rrow's conditions: Existence and correspondence theorems for voting procedures and social welfare functions.
\newblock \emph{Journal of Economic Theory}, 10:\penalty0 187--217, 1975.

\bibitem[Siddique et~al.(2020)Siddique, Weng, and Zimmer]{SiddiqueICML2020fair}
U.~Siddique, P.~Weng, and M.~Zimmer.
\newblock Learning fair policies in multi-objective (deep) reinforcement learning with average and discounted rewards.
\newblock In \emph{Proceedings of the 37th International Conference on Machine Learning (ICML)}, pages 8905--8915, 2020.

\bibitem[Sorensen et~al.(2024)Sorensen, Moore, Fisher, Gordon, Mireshghallah, Rytting, Ye, Jiang, Lu, Dziri, Althoff, and Choi]{SorensenICML2024roadmap}
T.~Sorensen, J.~Moore, J.~Fisher, M.~L. Gordon, N.~Mireshghallah, C.~Michael Rytting, A.~Ye, L.~Jiang, X.~Lu, N.~Dziri, T.~Althoff, and Y.~Choi.
\newblock Position: A roadmap to pluralistic alignment.
\newblock In \emph{Proceedings of the 41st International Conference on Machine Learning (ICML)}, pages 46280--46302, 2024.

\bibitem[Swamy et~al.(2024)Swamy, Dann, Kidambi, Wu, and Agarwal]{swamy2024minimaximalist}
G.~Swamy, C.~Dann, R.~Kidambi, S.~Wu, and A.~Agarwal.
\newblock A minimaximalist approach to reinforcement learning from human feedback.
\newblock In \emph{Proceedings of the 41st International Conference on Machine Learning (ICML)}, pages 47345--47377, 2024.

\bibitem[Udell and Boyd(2013)]{udell2013maximizing}
M.~Udell and S.~Boyd.
\newblock Maximizing a sum of sigmoids.
\newblock \emph{Optimization and Engineering}, pages 1--25, 2013.

\bibitem[Yang et~al.(2019)Yang, Sun, and Narasimhan]{yang2019generalized}
R.~Yang, X.~Sun, and K.~Narasimhan.
\newblock A generalized algorithm for multi-objective reinforcement learning and policy adaptation.
\newblock In \emph{Proceedings of the 33rd Annual Conference on Neural Information Processing Systems (NeurIPS)}, 2019.

\bibitem[Young(1988)]{Young88}
H.~P. Young.
\newblock Condorcet's theory of voting.
\newblock \emph{The American Political Science Review}, 82\penalty0 (4):\penalty0 1231--1244, 1988.

\bibitem[Zahavy et~al.(2021)Zahavy, O'Donoghue, Desjardins, and Singh]{zahavy2021reward}
T.~Zahavy, B.~O'Donoghue, G.~Desjardins, and S.~Singh.
\newblock Reward is enough for convex {MDP}s.
\newblock In \emph{Proceedings of the 35th Annual Conference on Neural Information Processing Systems (NeurIPS)}, pages 25746--25759, 2021.

\bibitem[Zaremba et~al.(2023)Zaremba, Dhar, Ahmad, Eloundou, Santurkar, Agarwal, and Leung]{ZDAE+23}
W.~Zaremba, A.~Dhar, L.~Ahmad, T.~Eloundou, S.~Santurkar, S.~Agarwal, and J.~Leung.
\newblock Democratic inputs to {AI}.
\newblock Blog post, 2023.
\newblock URL \url{https://openai.com/blog/democratic-inputs-to-ai}.

\bibitem[Zhong et~al.(2024)Zhong, Deng, Su, Wu, and Zhang]{zhong2024provable}
H.~Zhong, Z.~Deng, W.~J. Su, Z.~S. Wu, and L.~Zhang.
\newblock Provable multi-party reinforcement learning with diverse human feedback.
\newblock \emph{arXiv preprint arXiv:2403.05006}, 2024.

\end{thebibliography}

\newpage
\appendix

\section{Proofs of \Cref{sec:voting-rules}}
\label{app:proofs}

We define a \emph{fully connected MOMDP} as a MOMDP $M$  where for every pair of states $s, s' \in \states$ and any action $a \in \actions$, $\trans(s, a, s') = \frac{1}{|\states|}$.
Throughout the hardness proofs, we construct a fully connected MOMDP.
For such MOMDPs, it is not difficult to observe that the state-action occupancy polytope, for both the average and discounted reward,  is equivalent to $\{d_\pi \mid  \sum_{a} d_{\pi}(s, a) = \frac{1}{|\states|}, \forall s \in \states\}$ for both the discounted and average reward case. Furthermore, 
\begin{equation}
\label{eq:fully-connected}
\pi(a|s) = \frac{d_{\pi}(s, a)}{\sum_{a' \in \actions }d_{\pi}(s, a')} =  |\states| \cdot d_{\pi}(s, a)
\end{equation}

\subsection{Proof of \Cref{thm:plurality-hardness}}
\label{prf:plurality-hardness}
\begin{proof}[Proof of \Cref{thm:plurality-hardness}]
We show hardness of approximation by an approximation-preserving reduction from the \emph{maximum independent set (MIS)} problem. 
\begin{definition}[maximum independent set (MIS)] For a graph $G = (V, E)$ with vertex set $V$ and edge set $E \subseteq 2^{\binom{V}{2}}$, the \emph{maximum independent set (MIS)} of $G$ is defined as  the maximum subset of vertices $V' \subseteq V$ such that there are no edges between any pair of vertices $v_1, v_2 \in V'$. 
\end{definition}
\begin{theorem}[\citet{haastad1999clique}]
\label{thm:hard-mis}
For any fixed $\epsilon \in (0, 1)$, there is no polynomial-time $\frac{1}{n^{1/2-\epsilon}}$-approximation algorithm for the MIS problem unless $\mathbf{P} = \mathbf{NP}$, and no $\frac{1}{n^{1-\epsilon}}$-approximation algorithm  unless $\mathbf{ZPP} = \mathbf{NP}$.
\end{theorem}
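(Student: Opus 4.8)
The plan is to prove the equivalent statement for \textsc{Max-Clique}, since an independent set in a graph $G$ is exactly a clique in the complement $\bar G$, and $G$ and $\bar G$ have the same vertex count $n$; any inapproximability factor expressed in terms of $n$ therefore transfers verbatim between the two problems. So it suffices to show that \textsc{Max-Clique} admits no polynomial-time $n^{1/2-\epsilon}$-approximation unless $\mathbf{P}=\mathbf{NP}$, and no $n^{1-\epsilon}$-approximation unless $\mathbf{ZPP}=\mathbf{NP}$. The whole argument is driven by the correspondence between the clique number of a graph and the optimal acceptance probability of a probabilistically checkable proof (PCP) verifier.

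The translation device is the FGLSS graph. Given a verifier for an $\mathbf{NP}$-complete language that uses $r$ random bits, has perfect completeness, soundness $s$, and free-bit complexity $f$ (for each random string, at most $2^{f}$ accepting answer patterns), one builds a graph whose vertices are the accepting (random-string, answer-pattern) configurations, with edges joining mutually consistent configurations. This graph has $N \le 2^{r+f}$ vertices; on yes-instances its clique number is $2^{r}$, while on no-instances it is at most $s\cdot 2^{r}$. Hence distinguishing clique number $2^{r}$ from $s\cdot 2^{r}$ is $\mathbf{NP}$-hard, producing a hardness gap of $1/s$ on a graph of $N \le 2^{r+f}$ vertices. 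To turn this into an $N^{1-\epsilon}$ factor, I would drive $s \to 0$ while keeping $r = O(\log(\text{instance size}))$, so that $N$ stays polynomial, and keeping the free-bit complexity $f$ negligible relative to $\log(1/s)$.

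The crux, and the main obstacle, is the amplification. Ordinary parallel repetition lowers soundness via $s \mapsto s^{k}$ but multiplies the randomness by $k$, blowing the graph up to superpolynomial size and destroying the gap-to-size ratio. The key step is therefore a randomness-efficient amplification. I would first establish a PCP for $\mathbf{NP}$ whose amortized free-bit complexity $\bar f = f/\log(1/s)$ is an arbitrarily small constant (the free-bit PCP framework of Bellare, Goldreich, and Sudan, pushed to near-optimal amortized free bits), and then amplify soundness by sampling random strings along an expander walk or through a disperser, in the style of randomness-efficient amplification. This reduces $s$ to any target $N^{-(1-\epsilon)}$ while spending only $r(1+o(1))$ random bits and only $\bar f \cdot \log(1/s)$ free bits. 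Feeding these parameters into the FGLSS bound gives $\log(1/s) \approx (1-\epsilon)\log N$, i.e., a clique gap of $N^{1-\epsilon}$.

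Finally, the split between the two stated conclusions comes entirely from the quality of the sampler used in the amplification. A randomized (Las Vegas) construction of the disperser succeeds with high probability and can be checked, yielding the full factor $n^{1-\epsilon}$ but only placing the problem outside $\mathbf{ZPP}$, which is why the hypothesis is $\mathbf{ZPP}=\mathbf{NP}$. Replacing it with an explicit, deterministic but quantitatively weaker sampler keeps the reduction inside $\mathbf{P}$ but only drives soundness to roughly $N^{-(1/2-\epsilon)}$, giving the factor $n^{1/2-\epsilon}$ under the stronger assumption $\mathbf{P}=\mathbf{NP}$. I expect the near-optimal amortized-free-bit PCP construction and the randomness-efficient gap amplification to be the substantial technical work; the FGLSS translation and the complementation back to \textsc{MIS} are comparatively routine.
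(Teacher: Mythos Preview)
The paper does not prove this theorem at all; it is stated as a citation to \citet{haastad1999clique} and used as a black box in the reduction establishing \Cref{thm:plurality-hardness}. There is therefore nothing in the paper to compare your argument against.

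Your outline is a reasonable high-level sketch of H\aa stad's actual approach (FGLSS graph from a low-amortized-free-bit PCP, plus randomness-efficient amplification, with the $\mathbf{P}$ versus $\mathbf{ZPP}$ split arising from explicit versus randomized disperser constructions). But for the purposes of this paper the result is simply imported from the literature, and no proof or proof sketch is expected or provided.
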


\textbf{Construction of MOMDP.} Let $G = ([n], E)$ be a graph for which we want to find the maximum independent set. We create a fully connected MOMDP $\mdp$ with $|E|$ states  $\{s_e\}$, each corresponding to an edge $e \in E$. There are only two actions $\actions = \{a_1, a_2\}$. In state $s_e$ of edge $e = (e_1, e_2)$, performing action $a_1$ and $a_2$ correspond to $e_1$ and $e_2$ respectively. We create $n$ agents where agent $i$ corresponds to vertex $i$. The reward function of agent $i$ for state $s_e$ and action $a_k$ for $k \in \{1, 2\}$ is defined as 
\[
R_i(s_e, a_k) = \begin{cases}
    1,& \text{if}~e_k = i,\\
    0,& \text{o.w.}
\end{cases}
\]
In other words, the reward functions encode the set of edges incident to  vertex $i$.

\textbf{Correctness of reduction.} A policy $\pi$ is optimal for agent $i$ iff for all the edges incident to $i$ the action corresponding to $i$ is selected with probability $1$. If a policy $\pi$ is considered optimal by two agents $i$ and $i'$, then $e = (i, i') \notin E$, since at state $s_e$ either $\pi(a_1|s) = 1$ or $\pi(a_2|s) = 1$. Therefore, the set of agents that consider a policy optimal corresponds to an independent set in $G$. Furthermore, take any independent set $V' \subseteq [n]$. Let $\pi$ be the policy that for each edge $e$ selects the action corresponding to the vertex in $V'$ and, if no such vertices exist, select one arbitrarily. This policy is well defined, as $V'$ is an independent set with no edges between its vertices. Policy $\pi$ is optimal for agents of $V'$ since at each state their favourite action is selected. Thus, we have an equivalence between the maximum independent set of $G$ and the plurality winner policy of $M$. Therefore, the hardness of approximation follows from \Cref{thm:hard-mis}.
\end{proof}

\subsection{Proof of \Cref{thm:approval-hardness}}
\label{prf:approval-hardness}
\newcommand{\btrue}{\mathrm{True}}
\newcommand{\bfalse}{\mathrm{False}}
\begin{proof}[Proof of \Cref{thm:approval-hardness}]
We reduce the MAX-2SAT problem to finding an $\alpha$-approval policy for $\alpha \in (7/8, 1]$. 

For two Boolean variables $x_1$ and $x_2$, we denote the disjunction (i.e., logical or) of two variables by $x_1 \lor x_2$ and the negation of $x_1$ by $\neg x_1$.

\begin{definition}[maximum 2-satisfiability (MAX-2SAT)]
Given a set of $m$ Boolean variables $\{x_1, \ldots x_m\}$ and a set of $n$ $2$-literal disjunction clauses $\{C_1, \ldots, C_n\}$, the goal of the \emph{maximum 2-satifiability problem (MAX-2SAT)} is to find the maximum number of clauses that can be satisfied together by an assignment $\phi: \{x_j\}_{j \in [n]} \to \{\btrue, \bfalse\}$.
\end{definition}

\citet{garey76} showed that the MAX-2SAT problem is NP-hard.

\textbf{Construction of MOMDP.} For an instance of the MAX-2SAT problem, let $\{C_1, \ldots, C_n\}$ be a set of $m$ $2$-literal disjunction clauses over $n$ variables $\{x_1, \ldots x_m\}$. We create a fully connected MOMDP $M$ with $m$ states \emdash  state $s_j$ representing variable $x_j$. There are only two actions, $a_{\btrue}$ and $a_{\bfalse}$ which at state $s_j$ correspond to setting variable $x_j$ to $\btrue$ and $\bfalse$ respectively.
This way, a policy $\pi(a_{\btrue} | s_j)$ can be interpreted as the probability of setting the variable $x_j$ to $\btrue$, subject to $\pi(a_{\btrue} | s_j) = 1 - \pi(a_{\bfalse} | s_j)$.  

\newcommand{\ag}{\mathrm{ag}}
\textbf{Agents and reward function.}
We introduce some notation before introducing the agents and their reward function. Take a clause $C_i = c_{i, 1} \lor c_{i, 2}$ where $c_{i, k} \in \{x_j, \neg x_j\}_{j\in[m]}$ for $k\in[2]$. By combining the former relation with the mapping of $x_j$ and $\neg x_j$ to state-action pairs $(s_j, a_\btrue)$ and $(s_j, a_\bfalse)$ respectively, we define $\lambda: \{c_{i, 1}, c_{i, 2}\} \to \states \times \actions$. For every $c_{i, k}$, $\lambda(c_{i, k})$ is the state-action pair that evaluates $c_{i, k}$ to $\btrue$ when selected with probability one. Now, we are ready to introduce the agents. For each clause $C_i$, we create three agents, each defined as follows:
\begin{itemize}
    \item Agent $\ag_{i, 1}$ with a reward function that is $1$ for $\lambda(c_{i, 1})$ and $\lambda(c_{i, 2})$ and zero otherwise. 
    An optimal policy of this agent chooses the two corresponding state-action pairs exclusively, which can be interpreted as $c_{i, 1} \gets \btrue$ and $c_{i, 2} \gets \btrue$ that implies $C_{i} = \btrue$. 
    \item Agent $\ag_{i, 2}$ with a reward function that is $1$ for $\lambda(c_{i, 1})$ and $\lambda(\neg c_{i, 2})$ \emdash note the negation. This is another assignment of variables, $c_{i, 1} \gets \btrue$ and $c_{i, 2} \gets \bfalse$, that implies $C_i \gets \btrue$.
    \item Similarly, the reward function of $\ag_{i, 3}$  is $1$ for $\lambda(\neg c_{i, 1})$ and $\lambda(c_{i, 2})$ (which again implies $C_i \gets \btrue$).
\end{itemize}
For ease of exposition, and by a slight abuse of notation, in the rest of the proof we use $\pi$ to refer to the occupancy measure.  From \Cref{eq:fully-connected} we have $|\states| \cdot d_{\pi}(s, a) = \pi(a | s)$ and the $\alpha$-approval rule is invariant to affine transformation. Therefore, we let $\ret_i(\pi) = \langle \pi, R \rangle$.

\textbf{Expected return distribution.} The expected return of agent $\ag_{i, 1}$ for a policy $\pi$ is $\pi(\lambda(c_{i, 1})) + \pi(\lambda(c_{i, 2}))$. 
For conciseness, let $v_1 = \pi(\lambda(c_{i, 1})), v_2 = \pi(\lambda(c_{i, 2}))$. Then, the cdf is
\[
F_{\ag_{i, 1}}(v) = \int_{v_1=0}^v \int_{v_2 = 0}^v \I[v_1 + v_2 \le v] \cdot d v_1 \, d v_2 = \begin{cases}
    \frac{v^2}{2}& \text{for}~v \in [0, 1],\\
    1 - \frac{(2 - v)^2}{2} & \text{for}~v \in [1, 2].
\end{cases}
\]
See \Cref{fig:max2sat} for a visualization. The cdf of all other agents have the same form. For each of the $3n$ agents above, their maximum expected return is $2$.

\begin{figure}
\centering
    \begin{subfigure}[b]{0.45\textwidth}
        \centering
        \begin{tikzpicture}[>=stealth]
            \begin{axis}[
                axis lines=middle,
                xtick={0,0.5,1},
                ytick={0,0.5,1},
                xlabel={$\pi(\lambda(c_{i,1}))$},
                ylabel={$\pi(\lambda(c_{i,2}))$},
                xlabel style={xshift=50pt, yshift=-10pt}, 
                ylabel style={xshift=-20pt, yshift=20pt},
                ymin=0, ymax=1.1,
                xmin=0, xmax=1.1,
                width=4cm, height=4cm,
                grid=both,
                major grid style={line width=.2pt,draw=gray!50},
                minor grid style={line width=.1pt,draw=gray!20}
            ]

            \draw (0,0) rectangle (1,1);

            \draw[dashed, gray] (-0.1,0.6) -- (0.6,-0.1);
            \draw[dashed, gray] (-0.1,1.1) -- (1.1,-0.1);
            \draw[dashed, blue] (-0.1,1.6) -- (1.6,-0.1);

            \draw[blue, ->] (0.75,0.75) -- (0.91, 0.91) node[midway, below left] {$(1, 1)$};
            \end{axis}
        \end{tikzpicture}
        \caption{The scaled occupancy polytope}
    \end{subfigure}
    \hspace{24pt}
    \begin{subfigure}[b]{0.45\textwidth}
        \centering
        \begin{tikzpicture}
            \begin{axis}[
                domain=0:2,
                samples=200,
                xlabel={$v$},
                ylabel={$F_i(v)$},
                axis lines=middle,
                xtick={0,1,2},
                ytick={0,0.5,1},
                ymin=0, ymax=1.1,
                xmin=0, xmax=2.1,
                width=4cm, height=4cm,
                grid=both,
                major grid style={line width=.2pt,draw=gray!50},
                minor grid style={line width=.1pt,draw=gray!20}
            ]
                \addplot[domain=0:1, thick, blue] {x^2/2};
                \addplot[domain=1:2, thick, blue] {1 - (2 - x)^2/2};
                \addplot[only marks, mark=*, mark size=1pt] coordinates {(1, 0.5)};
            \end{axis}
        \end{tikzpicture}
        \caption{The expected return cdf}
    \end{subfigure}
    \caption{The effective state-action occupancy polytope of agents and their expected return distribution.}
    \label{fig:max2sat}
\end{figure}
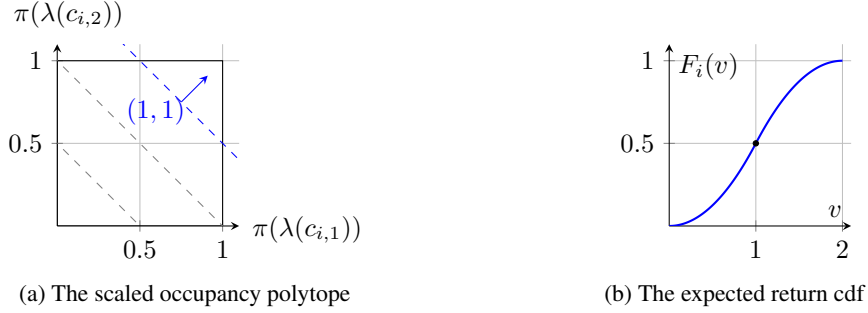

\textbf{Rounding a policy.} Observe that $F_i(3/2) = 7/8$. For  agent $\ag_{i, 1}$ to approve a policy under $\alpha \in (7/8, 1]$, their utility must exceed $3/2 = F^{-1}_{\ag_{i, 1}}(7/8)$. The fact that $v_1, v_2 \in [0, 1]$ in addition to $v_1 + v_2 > 3/2$, implies that $v_1 > {1}/{2}$ and $v_2 > {1}/{2}$. Therefore, if a policy $\pi$ is $\alpha$-approved by agent $\ag_{i, 1}$,  we have $\pi(\lambda(c_{i, 1})) > {1}/{2}$ and $\pi(\lambda(c_{i, 2})) > {1}/{2}$.  Further, observe that for at most one of the three agents $\{\ag_{i, k}\}_{k \in [3]}$ the condition of $\ret_{\ag_{i, k}}(\pi) > 3/2$ may hold as every pair of the agent disagree on one literal.

We round such a policy $\pi$ to an assignment $\phi$ by letting $\phi(x_j) \gets \btrue$ if $\pi(a_{\btrue}|s_j) > \frac{1}{2}$ and $\phi(x_j) \gets \bfalse$ otherwise. If an agent in $\{\ag_{i, k}\}_{k \in [3]}$ $\alpha$-approves $\pi$, then $C_i$ is satisfied by the assignment  $\phi$. Therefore, we have that $\optapr \le \optsat$, where $\optapr$ is the maximum feasible number of $\alpha$-approvals among all policies and $\optsat$ is the maximum number of clauses that can be satisfied among all assignments. 

Next, we show $\optsat \le \optapr$ by deriving a policy $\pi$ that gets $\optsat$ $\alpha$-approvals based on the optimal assignment for $\optsat$ by simply letting $\pi( a_{\btrue}|s_j) = 1$ iff $\phi(x_i) = \btrue$. If $\phi$ satisfies a clause $C_i$, then for the agent $\ag \in \{\ag_{i, k}\}_{k \in [3]}$ that matches the literal assignments of $\phi$ for $C_i$, we have $\ret_{\ag}(\pi) = 2$, which is an optimal, i.e., $1$-approval, policy for $\ag$. For the other two agents for $C_i$, $\ret_{\ag}(\pi) = 1$ which gets a return less than their required utility of $3/2$ for an $\alpha$-approval.  Therefore, we have that $\optsat = \optapr$ and the hardness of computation follows from our reduction.
\end{proof}

\section{Experimental Details}
\label{app:exp-details}

Experiments are all done on an AMD EPYC 7502 32-Core Processor with 258GiB system memory. We use Gurobi \cite{gurobi} to solve LPs and MILPs. 

\textbf{Running time.}
All our voting rules has a running time of less than ten minutes on the constructed MOMDPs of $3^5 = 243$ states, $6$ actions, and $10$ agents. The most resource extensive task of our experiments was sampling $5 \cdot 10^5$ random policies and computing the expected return for every agent which is a standard task. For each instance, we did this in parallel with $20$ processes in a total running time of less than $2$ hours per instance.

\textbf{Implementation details of Borda count rule.} After fitting a generalized logistic function for $F_i$ based on the expected return of sampled policies, we find the value $\mode(f_i)$, and check the existence of a policy by solving a linear program that achieves a expected return of $\mode(f_i)$ for all agents. Next, to utilize Gurobi LP solvers, we approximate the concave function $F_i$ by a set of linear constraints that form a piecewise linear concave approximation of $F_i$. Therefore, our final program for an approximate Borda count rule is simply an LP.

\end{document}